\title{Continual Mean Estimation Under User-Level Privacy}
\author[1]{
	Anand Jerry George\textsuperscript{ \textasteriskcentered} 
}
\author[2]{ 
	Lekshmi Ramesh\textsuperscript{ \textdagger} 
}
\author[2]{
	Aditya Vikram Singh\textsuperscript{ \textdaggerdbl}
}
\author[2]{
	Himanshu Tyagi\textsuperscript{ \textparagraph} 
}
\affil[1]{\'Ecole Polytechnique F\'ed\'erale de Lausanne}
\affil[2]{Indian Institute of Science, Bangalore}
\affil[ ]{ 
	\textsuperscript{\textasteriskcentered}\email{anand.george@epfl.ch}, \{\textsuperscript{\textdagger}\emailiisc{lekshmi}, \textsuperscript{\textdaggerdbl}\emailiisc{adityavs}, \textsuperscript{\textparagraph}\emailiisc{htyagi}\}@iisc.ac.in 
}
\date{}
\begin{document}
\maketitle
	
\begin{abstract}
	We consider the problem of continually releasing an estimate of the population mean of a stream of samples that is user-level differentially private (DP). At each time instant, a user contributes a sample, and the users can arrive in arbitrary order.
	Until now these requirements of continual release and user-level privacy were considered in isolation.
	But, in practice, both these requirements come together as the users often contribute data repeatedly and multiple queries are made.
 We provide an algorithm that 
 outputs a mean estimate at every time instant $t$ such that the overall release is user-level $\varepsilon$-DP and has the following error guarantee: Denoting by
 $M_t$ the maximum number of samples contributed by a user, as long as 
 $\tilde{\Omega}(1/\varepsilon)$ users have $M_t/2$ samples each, the error at time $t$ is
 $\tilde{O}(1/\sqrt{t}+\sqrt{M}_t/t\varepsilon)$. This is a universal error guarantee which is valid for all arrival patterns of the users. Furthermore, it (almost) matches
 the existing lower bounds for the single-release setting at all time instants when users have contributed equal number of samples. 
\end{abstract}
\section{Introduction}\label{sec:introduction}
Aggregate queries over data sets were originally believed to maintain the privacy of data contributors. However, over the past two decades several attacks have been proposed to manipulate the output of aggregate queries to get more information about an individual user's data~\cite{Narayanan_SSP_2008},~\cite{Sweeney_2013},~\cite{Garfinkel_ACM_2019}. To address this, several mechanisms have been proposed to release a noisy output instead of the original query output. Remarkably, these mechanisms have been shown to preserve privacy under a mathematically rigorous privacy requirement condition called {\em differential privacy}~\cite{Dwork_TCC_2006}. But, until recently, the analysis assumed that each user contributes one data-point and not multiple. Furthermore, the data set was assumed to be static and only one query was made. Our goal in this paper is to address a more practical situation where multiple queries are made and the data set keeps on getting updated between two queries, using contributions from existing or new users. We provide an almost optimal private mechanism for this case for the specific running-average query, which can easily be adopted to answer more general aggregate queries as well.

\subsection{Problem formulation and some heuristics}
Consider a stream $(x_{1},\ldots,x_{T})$ of $T$ data points contributed by $n$ users, where $T\geq n$. 
For simplicity, we assume that one point is contributed at each time: $x_t\in \mR^d$ is contributed by a user $u_t\in[n]$ at time $t$. The maximum number of samples contributed by any user till time instant $t$ is denoted by $M_{t}$, and we assume that $M_t\le m$, i.e., each user contributes at most $m$ samples.
\medskip

We formulate our query release problem as a statistical estimation problem to identify an optimal mechanism. Specifically, we assume that each $x_{t}$ is drawn independently from a distribution $P$ on $\mathbb{R}^{d}$ with unknown mean $\mu$. 
At each time step $t\in[T]$, we are required to output an estimate $\est_{t}$ for the mean of $P$, while guaranteeing that the sequence of outputs $(\est_{1},\ldots,\est_{t})$ is {\em user-level $\varepsilon$-differentially private ($\varepsilon$-DP)}. Namely, the output is $\varepsilon$-DP with respect to the user input comprising all the data points contributed by a single user~\cite{Amin_ICML_2019}.
\medskip

A naive application of standard differentially private mechanisms at each time step will lead to error rates with suboptimal dependence on the time window and the number of samples contributed by each user. For instance, consider the most basic setting where users draw samples independently from  a Bernoulli distribution with parameter $\mu$. At any time $t\in[T]$, a single user can affect the sample mean $(1/t)\sum_{i=1}^{t}x_{i}$ by at most $M_{t}/t$. Adding Laplace noise with parameter 
$M_{t}/(t\varepsilon)$ to this running sum will therefore guarantee user-level $\varepsilon$-DP at each step (implying $\varepsilon t$-DP over $t$ steps) and an error of $O(1/\sqrt{t}+M_{t}/t\varepsilon)$. Rescaling the privacy parameter, we get an error that scales as $O(1/\sqrt{t}+M_{t}/\varepsilon)$. 
While the statistical error term is optimal, the error term due to privacy is not -- it does not, in fact, improve as time progresses.
One can do better by using mechanisms specific to the streaming setting such as the binary mechanism~\cite{Chan_ICALP_2010},~\cite{Dwork_STOC_2010}, as we describe in more detail in Section \ref{sec:algorithm}. Indeed, we will see that the privacy error term can be improved in two aspects: first, it can be made to decay as time progresses, and second, it only needs to grow sublinearly with $M_{t}$. 
\medskip 

However, a key challenge still remains. Each user is contributing multiple samples to the stream, and different samples from the same user can come at arbitrary time instants. 
The output must depend on the the arrival pattern of the users.
For instance, when all samples in the stream are contributed by a single user,
we cannot release much information. Indeed,
changing this user's data can potentially change any query responses by a large amount, leading to increased sensitivity and addition of large amounts of noise to guarantee user-level privacy. A better strategy is to withhold answers for a while until a new user arrives -- this provides a sort of {\em diversity advantage} and reduces the amount of noise we need to add. The process of withholding alone does not, however, lead to an optimal error rate.  We additionally need to control sensitivity by forming local averages of each users samples and then truncating these averages, as is done in the one-shot setting~\cite{Levy_Neurips_2021}, before adding noise.

\subsection{Summary of contributions}
We present a modular and easy-to-implement algorithm for continual mean estimation under user-level privacy constraints. 
Our main algorithm has two key components: a sequence of binary mechanisms that keep track of truncated averages 
 and a withhold-release mechanism that decides when to update the mean estimate. 
The role of binary mechanisms is similar to~\cite{Chan_ICALP_2010},~\cite{Dwork_FOCS_2010} -- to minimize the number of outputs each data point influences. 
The withhold-release mechanism releases a query only when there is ``sufficient diversity,'' i.e., enough users have contributed to the data. In fact, in a practical implementation of our algorithm, we will need to maintain this diversity by omitting excessive number of samples from a few users from the mean estimate evaluation.
Together, these components allow us to balance the need for more data for accuracy and the need for 
controlling sensitivity due to a single user's data. 
\medskip
 
The resulting performance is characterized roughly as follows; 
see Section \ref{sec:algorithm} for the formal statement.
\begin{mainresult}
Our algorithm provides a user-level $\varepsilon$-DP mechanism to output
a mean estimate at every time instant $t$ when the data has ``sufficient diversity''
with error $\tilde{O}(1/\sqrt{t}+\sqrt{M_{t}}/t\varepsilon)$, where
$M_{t}$ is the maximum number of samples contributed by any user till time step $t$. 
\end{mainresult}

We do not make any assumptions on the order in which the data points arrive from different users. The ``sufficient diversity'' condition in our theorem (formalized in Definition~\ref{def:diversity}) codifies the necessity to have sufficient number of users with sufficient number of samples for our algorithm to achieve good accuracy while ensuring user-level privacy. Section~\ref{sec:algorithm} further elaborates the difficulty of obtaining good accuracy with arbitrary user ordering and how our \textit{exponential withhold-release} mechanism helps overcome this. 

\subsection{Prior Work} \label{subsec:priorwork}
Continually releasing query answers can leak more information when compared to the single-release setting, since each data point can now be involved in multiple query answers.
In addition to this, each user can contribute multiple data points, in which case we would like to ensure that the output of any privacy-preserving mechanism we employ remains roughly the same even when \emph{all} of the data points contributed by a user are changed.
There have been a few foundational works on both these fronts~\cite{Dwork_STOC_2010},~\cite{Jain_Arxiv_2021},~\cite{Levy_Neurips_2021}, but a unified treatment is lacking. Indeed, addressing the problem of user-level privacy in the streaming setting was noted as an interesting open problem in \cite{Nikolov_2013}. 
\smallskip

Privately answering count queries in the continual release setting was first addressed in ~\cite{Dwork_STOC_2010},~\cite{Chan_ICALP_2010}, where the binary mechanism was introduced. This mechanism releases an estimate for the number of ones in a bit stream by using a small number of noisy partial sums. Compared to the naive scheme of adding noise at every time step which leads to an error that grows linearly with $T$, the binary tree mechanism has error that depends logarithmically on $T$.
Following this, other works have tried to explore (item-level) private continual release in other settings~\cite{Chan_PETS_2012},~\cite{Bolot_ICDT_2013},~\cite{Ding_neurips_2017},~\cite{Joseph_neurips_2018},~\cite{Perrier_NDSS_2019}, ~\cite{Jain_Arxiv_2021}.
\medskip

Releasing statistics under user-level privacy constraint was discussed in \cite{Dwork_STOC_2010},~\cite{Dwork_ICS_2010}, and  has recently begun to gain attention~\cite{Levy_Neurips_2021},~\cite{Cummings_neurips_2021},~\cite{Narayanan_ICML_2022}. In particular, \cite{Levy_Neurips_2021} considers user-level privacy for one-shot $d$-dimensional mean estimation and shows that a truncation based estimator achieves error that scales as $\tilde{O}(\sqrt{d/mn}+\sqrt{d}/\sqrt{m}n\varepsilon)$, provided there are at least $\tilde{O}(\sqrt{d}/\varepsilon)$ users. This requirement on the number of users was later improved to $\tilde{O}(1/\varepsilon)$ in \cite{Narayanan_ICML_2022}. \cite{Cummings_neurips_2021} takes into account heterogeneity of user distributions for mean estimation under user-level privacy constraints.

\subsection{Organization}
Section \ref{sec:preliminaries} describes the problem setup and gives a brief recap on user-level privacy and the binary mechanism. We describe our algorithm in Section \ref{sec:algorithm} and give a rough sketch of its privacy and utility guarantees. Section \ref{sec:extensions} includes a discussion on extension to $d$-dimensional mean estimation and handling other distribution families. Finally, in Section \ref{sec:lowerbound} we discuss the optimality of our algorithm, and end with discussion on future directions in Section \ref{sec:discussion}.

\section{Preliminaries}\label{sec:preliminaries}

%%%%%%%%%%%%%%%%%%%%%%%%%%%%%%%%%%%%%%%%%%%%%%%%%%%%%%%%%%%%%%
\subsection{Problem setup}
We observe an input stream of the form $((x_1,u_1),(x_2,u_2),\ldots,(x_T,u_T))$, where $x_{t}\in\cX$ is the sample and $u_{t}\in[n]$ is the user contributing the sample $x_{t}$. The samples $\paren{x_t}_{t\in[T}$ are drawn independently from a distribution with unknown mean. The goal is to output an estimate $\est_t$ of the mean for every $t \in [T]$, such that the overall output $(\est_1,\ldots,\est_T)$ is user-level $\eps$-DP. We present our main theorems for the case when each $x_{t}$ is a Bernoulli random variable with unknown mean $\mean$. Extensions to other distribution families are discussed in Section~\ref{sec:extensions}.

%%%%%%%%%%%%%%%%%%%%%%%%%%%%%%%%%%%%%%%%%%%%%%%%%%%%%%%%%%%%%%%
\subsection{Differential privacy}

Let $\sigma =\Big((x_{t},u_{t})\Big)_{t\in[T]}$ and $\sigma^{\prime}=\Big((x_{t}^{\prime},u_{t}^{\prime})\Big)_{t\in[T]}$ denote two streams of inputs.
We say that $\sigma$ and $\sigma^{\prime}$ are \textit{user-level neighbors} if there exists $j\in[n]$ such that $x_{t}=x_{t}^{\prime}$ for every $t\in[T]$ satisfying $u_{t}\ne j$.
We now define the notion of a user-level $\varepsilon$-DP algorithm.
\begin{definition}
	An algorithm $\mathcal{A}:(\mathcal{X} \times [n])^{T}\rightarrow \mathcal{Y}$ is said to be user-level  $\varepsilon$-DP if for every pair of streams $\sigma, \sigma^{\prime}$ that are user-level neighbors and every subset $Y\subseteq \mathcal{Y}$,
	%%\begin{align*}
		$\mathbb{P}(\mathcal{A}(\sigma)\in Y)\le e^{\varepsilon} \ \mathbb{P}(\mathcal{A}(\sigma^{\prime})\in Y).$
	%%\end{align*}
\end{definition}
We will be using the following composition result satisfied by DP mechanisms.

\begin{lemma}\label{lem:composition}
Let $\cM_{i}:(\mathcal{X} \times [n])^{T}\rightarrow \cY$ be user-level $\varepsilon_i$-DP mechanisms for $i\in[k]$. Then the composition $\cM:(\mathcal{X} \times [n])^{T}\rightarrow\mR^{k}$ of these mechanisms given by $\cM(x):=(\cM_{1}(x),\ldots,\cM_{k}(x))$ is user-level $(\sum_{i=1}^{k}\varepsilon_{i})$-DP.
\end{lemma}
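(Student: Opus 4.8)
The plan is to induct on $k$, reducing everything to the two-mechanism case; I would make explicit at the outset the standard assumption that $\cM_1,\ldots,\cM_k$ use mutually independent internal randomness, since otherwise the composition $\cM(x)=(\cM_1(x),\ldots,\cM_k(x))$ is not even a well-defined single randomized mechanism and the conclusion can fail. The base case $k=1$ is the hypothesis itself. For the inductive step I would set $\cM_{1:k-1}:=(\cM_1,\ldots,\cM_{k-1})$, invoke the inductive hypothesis to conclude that $\cM_{1:k-1}$ is user-level $(\sum_{i=1}^{k-1}\varepsilon_i)$-DP, and observe that $\cM=(\cM_{1:k-1},\cM_k)$ is then a composition of just two mechanisms; so the whole lemma follows once the case $k=2$ is settled.

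For $k=2$, I would fix a pair of user-level neighbors $\sigma,\sigma'$ and a measurable set $Y\subseteq\cY\times\cY$, and for $y_1\in\cY$ write $Y_{y_1}:=\{y_2:(y_1,y_2)\in Y\}$ for the corresponding section. By independence of the two mechanisms' coins, Tonelli's theorem gives the factorization $\mathbb{P}(\cM(\sigma)\in Y)=\int \mathbb{P}(\cM_2(\sigma)\in Y_{y_1})\,\mathbb{P}(\cM_1(\sigma)\in dy_1)$. I would then apply the $\varepsilon_2$-DP property of $\cM_2$ to each section $Y_{y_1}$ to bound the integrand by $e^{\varepsilon_2}\,\mathbb{P}(\cM_2(\sigma')\in Y_{y_1})$, and finally apply the $\varepsilon_1$-DP property of $\cM_1$ to the remaining integral. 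This last step needs the DP inequality for the bounded measurable function $f(y_1):=\mathbb{P}(\cM_2(\sigma')\in Y_{y_1})\in[0,1]$ rather than merely for an event; I would obtain it from the layer-cake identity $\int f\,d\nu=\int_0^1 \nu(\{f>t\})\,dt$ applied to the output laws $\nu_\sigma,\nu_{\sigma'}$ of $\cM_1$, since $\varepsilon_1$-DP yields $\nu_\sigma(\{f>t\})\le e^{\varepsilon_1}\nu_{\sigma'}(\{f>t\})$ at every level $t$. Chaining the two bounds gives $\mathbb{P}(\cM(\sigma)\in Y)\le e^{\varepsilon_1+\varepsilon_2}\,\mathbb{P}(\cM(\sigma')\in Y)$, finishing $k=2$ and hence the induction.

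The argument is entirely standard, so the only real obstacle is bookkeeping rather than ideas: (i) being honest about the independent-randomness assumption, which is what legitimizes the Fubini/Tonelli factorization, and (ii) promoting the defining DP inequality from measurable sets to $[0,1]$-valued measurable functions via the layer-cake representation. If one restricts attention to discrete output spaces $\cY$ (which already conveys the intuition), both issues evaporate: $\mathbb{P}(\cM(\sigma)=(y_1,\ldots,y_k))=\prod_{i=1}^{k}\mathbb{P}(\cM_i(\sigma)=y_i)\le e^{\sum_{i=1}^{k}\varepsilon_i}\prod_{i=1}^{k}\mathbb{P}(\cM_i(\sigma')=y_i)$, and summing over $(y_1,\ldots,y_k)\in Y$ concludes. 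I expect the authors' proof to be essentially this discrete computation, or simply a black-box citation to basic composition.
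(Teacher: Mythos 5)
Your proof is correct: the induction to the two-mechanism case, the Tonelli factorization under independent internal randomness, and the promotion of the DP inequality from events to $[0,1]$-valued measurable functions via the layer-cake identity together constitute a complete and careful proof of basic composition. The paper itself offers no proof of this lemma --- it is stated as a standard fact from the differential privacy literature --- so your write-up supplies a genuine argument where the authors rely on a black-box citation, exactly as you anticipated.
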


We now define the Laplace mechanism, 
a basic privacy primitive we employ. We use ${\rm Lap}(b)$ to denote the Laplace distribution with parameter $b$.
\begin{definition}
For a function $f:\mathcal{X}^{n}\rightarrow\mR^k$, the Laplace mechanism with parameter $\varepsilon$ is a randomized algorithm $\cM:\cX^{n}\rightarrow \mR^k$ with
%\begin{align*}
$	\cM(x) = f(x) + 
	(
	Z_1,
	\cdots,
	Z_k
	)$,
%	\end{align*}
where $Z_1,\ldots,Z_k  \overset{\rm i.i.d.}{\sim} {\rm Lap}(\Delta f/\varepsilon)$, $\Delta f:= \max_{x,x^{\prime}\in\cX^n}\norm{f(x)-f(x^{\prime})}_1$ and addition is coordinate-wise.
\end{definition} 

The following lemma states the privacy-utility guarantee of the Laplace mechanism. 
\begin{lemma}\label{lem:lap_mech}
	The Laplace mechanism is $\varepsilon$-DP and guarantees  
	\[
	\mathbb{P}\Big({\norm{f(x)-\cM(x)}_\infty \ge \Delta f/\varepsilon\cdot\ln(k/\delta)}\Big)\le\delta  \quad \forall\,\delta\in(0,1].
	\]
\end{lemma}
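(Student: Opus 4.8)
The plan is to prove the two assertions separately, since privacy and utility decouple cleanly for the Laplace mechanism. For the privacy claim, I would fix any two neighbouring inputs $x, x' \in \cX^n$ and write down the density of $\cM(x)$ at a point $y \in \mR^k$ explicitly as $p_x(y) = \prod_{i=1}^{k} \frac{\varepsilon}{2\Delta f}\exp\!\big(-\tfrac{\varepsilon}{\Delta f}|f(x)_i - y_i|\big)$. Taking the ratio $p_x(y)/p_{x'}(y)$, the normalizing constants cancel and I am left with $\exp\!\big(\tfrac{\varepsilon}{\Delta f}\sum_{i=1}^k (|f(x')_i - y_i| - |f(x)_i - y_i|)\big)$. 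The triangle inequality bounds each summand by $|f(x)_i - f(x')_i|$, so the exponent is at most $\tfrac{\varepsilon}{\Delta f}\|f(x) - f(x')\|_1 \le \varepsilon$ by the definition of $\Delta f$. Integrating the pointwise bound $p_x(y) \le e^{\varepsilon} p_{x'}(y)$ over any measurable $Y \subseteq \mR^k$ yields $\mathbb{P}(\cM(x) \in Y) \le e^{\varepsilon}\,\mathbb{P}(\cM(x') \in Y)$, which is exactly $\varepsilon$-DP.

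For the utility claim, I would start from the observation that $f(x) - \cM(x) = -(Z_1,\ldots,Z_k)$, so $\|f(x) - \cM(x)\|_\infty = \max_{i \in [k]} |Z_i|$. Each $Z_i$ has the one-sided tail $\mathbb{P}(|Z_i| \ge s) = e^{-s\varepsilon/\Delta f}$ for $s \ge 0$, which follows directly from the Laplace density. Setting $s = (\Delta f/\varepsilon)\ln(k/\delta)$ gives $\mathbb{P}(|Z_i| \ge s) = \delta/k$. A union bound over the $k$ coordinates then gives $\mathbb{P}(\max_{i} |Z_i| \ge s) \le k \cdot \delta/k = \delta$, and substituting back the value of $s$ is precisely the stated inequality. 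The restriction $\delta \in (0,1]$ just ensures $\ln(k/\delta) \ge 0$ so the threshold is well defined.

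There is no real obstacle here — both parts are standard facts about the Laplace mechanism, and the only points requiring any care are the correct cancellation of constants in the density ratio (and applying the triangle inequality in the right direction) for the privacy bound, and using the exact exponential form of the Laplace tail together with a clean union bound for the utility bound. If anything, the one thing worth double-checking is that the parameter of the noise is $\Delta f/\varepsilon$ consistently throughout, so that the exponent in the tail bound is $s\varepsilon/\Delta f$ and the choice of $s$ produces exactly $\delta/k$.
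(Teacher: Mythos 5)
Your proof is correct and is the standard argument for the Laplace mechanism: the density-ratio-plus-triangle-inequality computation for privacy, and the exact exponential tail of the Laplace distribution combined with a union bound over the $k$ coordinates for utility. The paper itself states this lemma without proof as a standard fact, so there is nothing to compare against beyond confirming that your argument is the canonical one and that all the constants (noise scale $\Delta f/\varepsilon$, threshold $(\Delta f/\varepsilon)\ln(k/\delta)$) check out.
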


%%%%%%%%%%%%%%%%%%%%%%%%%%%%%%%%%%%%%%%%%%%%%%%%%%%%%%%%%%%%%
\subsection{Binary mechanism} \label{subsec:binmech}

The binary mechanism (Algorithm \ref{alg:binmech}) \cite{Chan_ICALP_2010} receives as input a stream $(x_1,x_2,\ldots,x_T)$ and outputs a noisy sum $S_t$ of stream up to time $t$ 
at every $t \in [T]$ such that the overall output $(S_1,S_2,\ldots,S_T)$ is $\eps$-DP. Furthermore, for any $t \in [T]$, the error between the output and the running-sum satisfies $\abs{S_t - \sum_{i=1}^{t}x_i} = O\paren{ \frac{\Delta}{\eps} \log T \sqrt{\log t} \ln \frac{1}{\delta} }$ with probability at least $1-\delta$. Here, $\Delta$ is the magnitude of maximum possible variation of an element in the stream; for e.g., if $1 \leq x_i \leq 3$ for every $i \in [T]$, then $\Delta = 2$.
\medskip

%E.g. $t=25$ can be expressed in binary form as $11001$. Thus, ${\rm NoisyPartialSums}[25]$ will be set to $\paren{x_{25} + {\rm Lap}(\eta)}$, where $\eta = \frac{\Delta}{\eps} (1+\log T)$. Further, the output released at $t=25$ would be $S_{25} = {\rm NoisyPartialSums}[16] + {\rm NoisyPartialSums}[24] + {\rm NoisyPartialSums}[25] = \paren{\sum_{i=1}^{16}x_i + {\rm Lap}(\eta)} + \paren{\sum_{i=17}^{24}x_i + {\rm Lap}(\eta)} + \paren{x_{25} + {\rm Lap}(\eta)}$.

We observe two important properties of the binary mechanism (Algorithm \ref{alg:binmech}):
\begin{itemize}
	\item[(i)] Any stream element $x_i$ is involved in computing at most $(1+\log T)$ terms of the array ${\rm NoisyPartialSums}$. For e.g., $x_1$ is needed only while computing ${\rm NoisyPartialSums}[t]$ for $t = 1,2,4,8,$ and so on. 
	\item[(ii)] For any $t$, the output $S_t$ is a sum of at most $(1+\log t)$ terms from the array ${\rm NoisyPartialSums}$.
\end{itemize}
We now describe how these properties of the binary mechanism lead to privacy and utility (accuracy) guarantees.
\medskip

\textbf{Privacy.} Since, for every $t$, the output $S_t$  is a function of ${\rm NoisyPartialSums}$, it suffices to ensure that the array ${\rm NoisyPartialSums}$ is $\eps$-DP. From (i) above, changing a stream element will change at most $(1+\log T)$ terms of the array ${\rm NoisyPartialSums}$. Moreover, since the maximum variation of a stream element is $\Delta$, each of these terms of ${\rm NoisyPartialSums}$ will change by at most $\Delta$. Overall, changing an arbitrary stream element $x_i$ changes the $\ell_1$-norm of the array ${\rm NoisyPartialSums}$ by at most $\Delta(1+\log T)$. Thus, to ensure that the overall stream of outputs from the binary mechanism is $\eps$-DP, it suffices to add ${\rm Lap}(\eta)$ noise to each term of ${\rm NoisyPartialSums}$, where $\eta = \Delta(1+\log T)/\eps$.
\medskip

\textbf{Utility.} From (ii) above, since the output $S_t$ is a sum of at most $(1+\log t)$ terms from the array ${\rm NoisyPartialSums}$, where each term of ${\rm NoisyPartialSums}$ has an independent ${\rm Lap}(\eta)$ noise, we have that, with probability at least $1-\delta$, $\abs{S_t - \sum_{i=1}^{t}x_i} \leq \eta \sqrt{1+\log t} \ln\frac{1}{\delta}$.

\begin{algorithm}[ht]
	\caption{Binary Mechanism \cite{Chan_ICALP_2010}}
	\label{alg:binmech}
	\begin{algorithmic}[1]
		\Require $(x_t)_{t\in[T]}$ (stream), $T$ (stream length), $\Delta$ (max variation of a stream element), $\eps$ (privacy parameter)
		
		\State {\bf Initialize} ${\rm Stream}$, ${\rm NoisyPartialSums}$ (arrays of length $T$)
		
		\State $\eta \gets {\Delta (1+\log T)}/{\eps}$ \Comment \textit{noise parameter}
		
		\For{$t = 1,2,\ldots,T$}
		
		\State ${\rm Stream}[t] \gets x_t$
		
		\State Express $t$ in binary form: $t = \sum_{j=0}^{\lfloor \log t \rfloor} (b_j \cdot 2^j)$ \Statex \Comment \textit{$b_j \in \set{0,1}$}
		\State $j^{\ast} \gets \min\set{j: b_j \neq 0}$ \Comment \textit{e.g. $j^{\ast}=0$ if $t$ odd}
		
		%\Statex
		\State ${\rm NoisyPartialSums}[t] \gets$ \Statex \hspace{24mm} $\paren{\sum_{i=t-2^{j^{\ast}}+1}^{t} {\rm Stream}[i]} + \ {\rm Lap}(\eta)$ \Statex \Comment \textit{noisy sum of the latest $2^{j^\ast}$ elements of ${\rm Stream}$}
		
		%\Statex
		\State ${\rm Sum} \gets 0$, ${\rm index} \gets 0$	
		\For{$j=\lfloor \log t \rfloor, \lfloor \log t \rfloor -1, \ldots,0$}
			\State ${\rm index} \gets {\rm index} + (b_j \cdot 2^j)$
			\State ${\rm Sum} \gets {\rm Sum} + {\rm NoisyPartialSums}[{\rm index}]$
		\EndFor
		\State {\bf return} $S_t = {\rm Sum}$
		\EndFor
	\end{algorithmic}
\end{algorithm}

In our algorithms, we invoke an instance of the binary mechanism as $\binmech$, and abstract out its functionality using the following three ingredients:
\begin{itemize} \itemsep0em
	\item $\binmechstream$: This is an array which will act as the input stream for the binary mechanism $\binmech$. In our algorithms, we will feed an element to $\binmechstream$ only at certain special time instances.
	\item $\binmechpartial$: This array will be of the same length as $\binmechstream$. The $k$-th term of $\binmechpartial$ will be computed after the $k$-th element enters $\binmechstream$. The magnitude of Laplace noise to be added while computing terms of $\binmechpartial$ (magnitude of Laplace noise would be same for all terms) will be passed as a noise parameter while invoking $\binmech$. The sum output by $\binmech$ would be formed by combining terms from $\binmechpartial$. 
	\item $\binmechsum$: Suppose, at time $t$, $\binmechstream$ (and, thus, $\binmechpartial$) has $k$ elements. Then, $\binmechsum$ is a function which, when invoked at time $t$, will output $S_k$ (noisy sum of all $k$ elements in $\binmechstream$) by combining terms stored in $\binmechpartial$.
	
\end{itemize}

\section{ALGORITHM}\label{sec:algorithm}
In this section, we build up to our main algorithm (Algorithm \ref{alg:multcounterfull}), pointing out the main ideas along the way.

\paragraph{A naive use of binary mechanism.} Given a stream $\big((x_t,u_t)\big)_{t\in[T]}$ (where $x_t \overset{\rm i.i.d.}{\sim} \ber(\mean)$), Algorithm~\ref{alg:naive} presents a straightforward way to estimate mean $\mean$ in the continual-release user-level DP setting. The algorithm feeds $x_t$ to binary mechanism $\binmech$ at every time instant $t$. Since each user contributes at most $m$ samples, changing a user will change at most $m(1+\log T)$ terms of $\binmechpartial$. Moreover, since the maximum variation of any stream element $x_t$ is $1$, changing a user will change the $\ell_1$-norm of $\binmechpartial$ by at most $m(1+\log T)$. Thus, for $\eta = \frac{m(1+\log T)}{\eps}$, adding ${\rm Lap}(\eta)$ noise to each term of $\binmechpartial$ ensures that this algorithm is user-level $\eps$-DP. To obtain an accuracy guarantee at any given time $t$, we observe the following: Since $x_1,\ldots,x_t$ are independent $\ber(\mean)$ samples, we have that, with probability at least $1-\delta$, $\abs{\sum_{i=1}^{t}x_i - t\mean} \leq \sqrt{\frac{t}{2} \ln\frac{2}{\delta}}$. Moreover, since $S_t$ is a sum of at most $(1+\log t)$ terms from $\binmechpartial$, where each term of $\binmechpartial$ has an independent ${\rm Lap}(\eta)$ noise, we have that, with probability at least $1-\delta$, $\abs{S_t - \sum_{i=1}^{t}x_i} \leq \eta \sqrt{1+\log t} \ln\frac{1}{\delta}$. Thus, using union bound, we have with probability at least $1-2\delta$ that
\begin{equation} \label{eq:naive}
\abs{\est_t-\mean } = O\paren{ \sqrt{\frac{1}{t} \ln\frac{1}{\delta}} + \frac{m}{t\eps} \sqrt{\log t} \log T \ln\frac{1}{\delta} }.
\end{equation}
The first term on the right hand side of \eqref{eq:naive} is the usual statistical error that results from using $t$ samples to estimate $\mu$. The second term in the error (``\textit{privacy error}''), however, results from ensuring that the stream of estimates $(\est_1,\ldots,\est_T)$ is user-level $\eps$-DP; note that this term is linear in $m$. We next describe a ``wishful'' scenario where the privacy error can be made proportional $\sqrt{m}$. We will then see how to obtain such a result in the general scenario, which is our main contribution.

%%%%%%%%%%%%
%%%%%%
\begin{algorithm}[ht]
	\caption{Continual mean estimation (naive)}
	\label{alg:naive}
	\begin{algorithmic}[1]
		\Require $\big((x_t,u_t)\big)_{t\in[T]}$ (stream), $T$ (stream length), $m$ (max no. of samples per user), $\eps$ (privacy parameter), $\delta$ (failure probability).
		
		\State {\bf Initialize} binary mechanism $\binmech$ with noise level $\eta = \frac{m (1+\log T)}{\eps}$. 
		
		\For{$t = 1,2,\ldots,T$}
		\State $\binmechstream \gets \binmechstream \cup \set{x_t}$
		\State $S_t \gets \binmechsum$
		\State {\bf return} $\est_t = \frac{S_t}{t}$
		\EndFor
	\end{algorithmic}
\end{algorithm}
%%%%%%%%
%%%%%%%%%%%%

\paragraph{A better algorithm in a wishful scenario: Exploiting concentration.} 
  Naive use of the binary mechanism for continual mean estimation fails to exploit the concentration phenomenon that results from each user contributing multiple i.i.d. samples to the stream. To see how concentration might help, consider the following scenario. Suppose that (somehow) we already have a prior estimate $\prior$ that satisfies $\abs{\prior-\mean} \leq \frac{1}{\sqrt{m}}$. Also, assume a user ordering where every user contributes their $m$ samples in contiguous time steps. That is, user $1$ contributes samples for the first $m$ time steps, followed by user 2 who contributes samples for the next $m$ time steps, and so on. In this case, Algorithm~\ref{alg:wishful} presents a way to exploit the concentration phenomenon. Even though this algorithm outputs $\est_t$ at every $t$, it only updates $\est_t$ at $t = m, 2m, 3m, \ldots,nm$. Upon receiving a sample from a user, the algorithm does not immediately add it to $\binmechstream$. Instead, the algorithm \textit{waits} for a user to contribute all their $m$ samples. It then computes the sum of those $m$ samples and \textit{projects} the sum to the interval 
%\begin{align*} %\label{eq:wishfulinterval}
%	\cI:= 
%	\biggl[
%	m\prior - \Delta, m\prior + \Delta 
%	\biggr], \quad \Delta := \sqrt{\frac{m}{2} \ln \frac{2n}{\delta}} + {\sqrt{m}}
%\end{align*}
\begin{equation} \label{eq:wishfulprojint}
	\cI = 
	\biggl[
	m\prior - \Delta, m\prior + \Delta 
	\biggr] \text{ where } \Delta = \sqrt{\frac{m}{2} \ln \frac{2n}{\delta}} + {\sqrt{m}}.
\end{equation}
before feeding the projected sum to $\binmechstream$. By concentration of sum of i.i.d. Bernoulli random variables,
%\tedit{(Lemma~\ref{lem:conc_ber})}{}, 
and the fact that $\abs{\prior-\mean} \leq \frac{1}{\sqrt{m}}$ (by assumption), we have  with probability at least $1-\delta$ that the sum of $m$ samples corresponding to \textit{every} user will fall inside the interval $\cI$; thus, the projection operator $\Pi(\cdot)$ has no effect with high probability.
\medskip

Since there are $n$ users, at most $n$ elements are added to $\binmechstream$ throughout the course of the algorithm. Now, since there can be at most $n$ elements in $\binmechstream$, a given element in $\binmechstream$ will be used at most $1+\log n$ times while computing $\binmechpartial$ (see Section~\ref{subsec:binmech}). So, changing a user can change the $\ell_1$-norm of $\binmechpartial$ by at most $(2\Delta)(1+\log n)$, where $\Delta$ is as in \eqref{eq:wishfulprojint}. Thus, to ensure that the algorithm is $\eps$-DP (given initial estimate $\prior$), it suffices to add independent ${\rm Lap}(\eta)$ noise while computing each term in $\binmechpartial$, where
%\begin{align*}% \label{eq:wishfulnoise}
%	\eta =
%	\frac{ 2 \paren{\sqrt{\frac{m}{2} \ln \frac{2n}{\delta}} + {\sqrt{m}}}(1+\log n) }{\eps}.
%\end{align*}
\begin{equation} \label{eq:wishfulnoise}
	\eta= \frac{2\Delta(1 +\log n)}{\varepsilon}
	=
	\frac{ 2 \paren{\sqrt{\frac{m}{2} \ln \frac{2n}{\delta}} + {\sqrt{m}}}(1+\log n) }{\eps}.
\end{equation}
Note that $\eta$ defined above is proportional to $\sqrt{m}$, whereas the $\eta$ defined in the naive use of binary mechanism was proportional to $m$. Thus (details in Appendix~\ref{supp:wishful}), one obtains a privacy error of $\tilde{O}(\sqrt{m}/t\eps)$, while still having a statistical error of $O(1/\sqrt{t})$ at every $t$.

%%%%%%
\begin{algorithm}[ht]
	\caption{Continual mean estimation (wishful scenario)}
	\label{alg:wishful}
	\begin{algorithmic}[1]
		\Require $\big((x_t,u_t)\big)_{t\in[T]}$ (stream \textit{with $m$ samples from any user arriving contiguously}), $n$ (no. of users), $m$ (no. of samples per user), $\eps$ (privacy), $\prior$ (estimate of $\mean$ s.t. $\abs{\prior-\mean} \leq 1/\sqrt{m}$), $\delta$ (failure probability).
		
		\Statex Let $\Pi(\cdot)$ be the projection on the interval $\cI$, where
		\begin{equation*} %\label{eq:wishfulprojint}
			\cI = 
			\biggl[
			m\prior - \Delta, m\prior + \Delta 
			\biggr] \text{ where } \Delta = \sqrt{\frac{m}{2} \ln \frac{2n}{\delta}} + {\sqrt{m}}.
		\end{equation*}
		
		\State {\bf Initialize} binary mechanism $\binmech$ with noise level $\eta$ where
		\begin{equation*}% \label{eq:wishfulnoise}
			\eta= \frac{2\Delta(1 +\log n)}{\varepsilon}.
		\end{equation*}
		
		\State {\bf Initialize} ${\rm total} \gets 0$.
		\For{$t < m$}
		\State {\bf return} $\est_t = \prior$
		\EndFor
		\For{$t \geq m$}
		\If{$t \in \set{m,2m,3m,\ldots,nm}$}
		\State ${\rm total} \gets {\rm total} + m$
		\State $\sigma = \Pi\paren{\sum_{j=t-m+1}^{t}x_j}$
		\State $\binmechstream \gets \binmechstream \cup \set{\sigma}$
		\EndIf
		\State $S_t \gets \binmechsum$
		\State {\bf return} $\est_t = \frac{S_t}{\rm total}$
		\EndFor
	\end{algorithmic}
\end{algorithm}
%%%%%%%%

The scenario here is wishful for two reasons: (i) we assume a prior estimate $\prior$ that we used to form the interval \eqref{eq:wishfulprojint}; (ii) we assume a special user ordering where every user contributed their samples contiguously. This user ordering ensures that $\est_t$ is updated after every $m$ time steps, and thus, for every $t$, at least $t/2$ samples are used in computing $\est_t$; this is the reason that the statistical error remains $O(1/\sqrt{t})$. Note that this algorithm will perform very poorly in the general user ordering. For instance, if the users contribute samples in a round-robin fashion (where a sample from user $1$ is followed by a sample from user $2$ and so on), then the algorithm would have to wait for time $t=n(m-1)$ to obtain $m$ samples from any given user.
\medskip

Although wishful, there are two main ideas that we take away from this discussion: One is the idea of \textit{withholding}. That is, it is not necessary to incorporate information about every sample received till time $t$ to compute $\est_t$. Second is the idea of \textit{truncation}. That is, the worst-case variation in the quantity of interest due to variation in a user's data can be reduced by projecting user's contributions on a smaller interval. This idea of exploiting concentration using truncation is also used in \cite{Levy_Neurips_2021} for mean estimation in the single-release setting.

%%%%%%%%%%%%%%%%%%%
%%%%%%%%%%%%%%%%%%%

\paragraph{Designing algorithms for worst-case user order: Exponential withhold-release pattern.} The algorithm discussed above suffered in the worst-case user ordering since samples from a user were ``withheld'' until {\it every} sample from that user was obtained. This was done to exploit (using ``truncation'') the concentration of sum of $m$ i.i.d. samples from the user. To exploit the concentration of sum of i.i.d. samples in the setting where the user order can be arbitrary, we propose the idea of withholding the samples (and applying truncation) at exponentially increasing intervals. Namely, for a given user $u$, we do not withhold the first two samples $x_1^{(u)}, x_2^{(u)}$; then, we withhold $x_3^{(u)}$ and release truncated version of $(x_3^{(u)} + x_4^{(u)})$ when we receive $x_4^{(u)}$; we then withhold $x_5^{(u)},x_6^{(u)},x_7^{(u)}$ and release truncated version of $(x_5^{(u)}+x_6^{(u)}+x_7^{(u)}+x_8^{(u)})$ when we receive $x_8^{(u)}$; and so on. In general, we withhold samples $x_{2^{\ell-1}+1}^{(u)}, \ldots, x_{2^{\ell}-1}^{(u)}$ and release truncated version of $\sum_{i=2^{\ell-1}+1}^{2^{\ell}}x_i^{(u)}$ when we receive the $2^\ell$-th sample $x_{2^{\ell}}^{(u)}$ from user $u$. 
\medskip
\medskip

We now present Algorithm \ref{alg:singcounter}, which uses this exponential withhold-release idea and, assuming a prior $\prior$, outputs an estimate with statistical error $O(1\sqrt{t})$ and privacy error $\tilde{O}(\sqrt{m}/t\eps)$ for arbitrary user ordering.

%%%%%%%%%%%%%%%%%%%%%
%%%%%%%%%%
\begin{algorithm}[ht]
	\caption{Continual mean estimation assuming prior (single binary mechanism)}
	\label{alg:singcounter}
	\begin{algorithmic}[1]
		\Require $\big((x_t,u_t)\big)_{t\in[T]}$ (stream), $n$ (max no. of users), $m$ (max no. of samples per user), $\eps$ (privacy parameter), $\prior$ (estimate of $\mean$ satisfying $\abs{\prior-\mean} \leq 1/\sqrt{m}$), $\delta$ (failure probability).
		
		\Statex - Let $x^{(u)}_j$ denote the $j$-th sample contributed by user $u$.
		\Statex - For $\ell \geq 1$, let $\Pi_\ell(\cdot)$ be the projection on the interval $\cI_\ell$, where
		\begin{equation*} %\label{eq:singinterval}
			\cI_\ell:= 
			\brac{ 2^{\ell-1}\prior - \Delta_\ell, 
				2^{\ell-1}\prior + \Delta_\ell }, \quad
			\Delta_\ell = \sqrt{\frac{2^{\ell-1}}{2} \ln \frac{2n\log m}{\delta}} + \frac{2^{\ell-1}}{\sqrt{m}}.
		\end{equation*}
		
		\State {\bf Initialize} binary mechanism $\binmech$ with noise level $\eta(m,n,\delta)$, where
		\begin{equation*} %\label{eq:singcounternoise}
			\eta(m,n,\delta) = \frac{ 2 \Delta (1+\log m) \log(1+n(1+\log m)) }{\eps}, \quad
			\Delta = \sqrt{\frac{m}{2} \ln \frac{2n\log m}{\delta}} + \sqrt{m}.
		\end{equation*}
		
		\State For each user $u$, let $M(u) = $ no. of times user $u$ has contributed a sample;
		{\bf initialize} $M(u) \gets 0$.
		
		\State {\bf Initialize} ${\rm total} \gets 0$
		
		\For{$t = 1,2,\ldots,T$}
		\State $M(u_t) \gets M(u_t) + 1$
		\If{$\log M(u_t) \in \Z_+$}
		\State $\ell \gets \log M(u_t)$ \Comment i.e. $M(u_t) = 2^\ell$
		\State ${\rm total} \gets {\rm total} + M(u_t)$				
		\If{$\ell = 0$}
		$\sigma \gets  x_t$
		\ElsIf{$\ell \in \set{1,2,3,\ldots}$}
		\State $\sigma \gets \Pi_{\ell} \paren{\sum_{j=2^{\ell-1}+1}^{2^\ell} x^{(u_t)}_j}$
		\EndIf			
		
		\State $\binmechstream \gets \binmechstream \cup \set{\sigma}$
		\EndIf
		\State $S_t \gets \binmechsum$
		\State {\bf return} $\est_t = \frac{S_t}{\rm total}$
		\EndFor
	\end{algorithmic}
\end{algorithm}
%%%%%%%%%%%
%%%%%%%%%%%%%%%%%%%%%%%%%

\paragraph{Continual mean estimation assuming prior estimate: Algorithm \ref{alg:singcounter}.} Let $x^{(u)}_j$ be the $j$-th sample obtained from user $u$. Let $\Pi_\ell(\cdot)$ be the projection on the interval $\cI_\ell$, where
%\begin{align} \label{eq:projinterval}
%	\cI_\ell:= 
%	\biggl[
%	2^{\ell-1}\prior &- \paren{ \sqrt{\frac{2^{\ell-1}}{2} \ln \frac{2n\log m}{\delta}} + \frac{2^{\ell-1}}{\sqrt{m}} }, \notag \\
%	&2^{\ell-1}\prior + \paren{ \sqrt{\frac{2^{\ell-1}}{2} \ln \frac{2n\log m}{\delta}} + \frac{2^{\ell-1}}{\sqrt{m}} } 
%	\biggr].
%\end{align}
\begin{equation} \label{eq:singinterval}
	\cI_\ell:= 
	\brac{ 2^{\ell-1}\prior - \Delta_\ell, 
		2^{\ell-1}\prior + \Delta_\ell }, \quad
	\Delta_\ell = \sqrt{\frac{2^{\ell-1}}{2} \ln \frac{2n\log m}{\delta}} + \frac{2^{\ell-1}}{\sqrt{m}}.
\end{equation}
Let $u_t$ be the user at time $t$. Upon receiving a sample from a user, the algorithm does not immediately add it to $\binmechstream$. Instead, the algorithm only adds anything new to $\binmechstream$ at time instances $t$ when the total number of samples obtained from user $u_t$ becomes $2^\ell$, for $\ell \in \set{0,1,2,\ldots}$. At such a time, the algorithm adds $\Pi_{\ell} \paren{\sum_{j=2^{\ell-1}+1}^{2^\ell} x^{(u_t)}_j}$ to $\binmechstream$. (The summation inside $\Pi_\ell$ makes sense only for $\ell \geq 1$; for $\ell=0$, which corresponds to the first sample from user $u_t$, the algorithm simply adds the sample to $\binmechstream$.) This has the effect that, corresponding to a user, there are at most $(1+\log m)$ elements in $\binmechstream$. Since, there are at most $n$ users, there are at most $n(1+\log m)$ elements added to $\binmechstream$ throughout the course of the algorithm.
\medskip

Now, since there can be at most $n(1+\log m)$ elements in $\binmechstream$, a given element in $\binmechstream$ will be used at most $\log(n(1+\log m))$ times while computing $\binmechpartial$ (see Section~\ref{subsec:binmech}). Thus, changing a user can change the $\ell_1$-norm of $\binmechpartial$ by at most $(1+\log m)(\log(n(1+\log m))) \Delta$, where $\Delta$ is the maximum sensitivity of an element contributed by a user to $\binmechstream$. As can be seen from \eqref{eq:singinterval}, we have $\Delta = 2\paren{\sqrt{\frac{m}{2} \ln \frac{2n\log m}{\delta}} + \sqrt{m}}$. 
Thus, to ensure that Algorithm \ref{alg:singcounter} is $\eps$-DP (given initial estimate $\prior$), it suffices to add independent ${\rm Lap}(\eta(m,n,\delta))$ noise while computing each term in $\binmechpartial$, where
%\begin{align} \label{eq:singcounternoise}
%	&\eta(m,n,\delta) = \notag \\
%	&\frac{ 2\paren{\sqrt{\frac{m}{2} \ln \frac{2n\log m}{\delta}} + \sqrt{m}} (1+\log m) \log(n(1+\log m)) }{\eps}.
%\end{align}
\begin{equation} \label{eq:singcounternoise}
	\eta(m,n,\delta) = \frac{ 2 \Delta (1+\log m) \log(1+n(1+\log m)) }{\eps}, \quad
	\Delta = \sqrt{\frac{m}{2} \ln \frac{2n\log m}{\delta}} + \sqrt{m}.
\end{equation}
Note that, as in \eqref{eq:wishfulnoise}, the magnitude of noise in \eqref{eq:singcounternoise} is $\tilde{O}(\sqrt{m})$. Moreover, with exponential withhold-release pattern, we are guaranteed that at any time $t$, the estimate $\est_t$ is computed using at least $t/2$ samples, no matter what the user ordering is (Claim~\ref{claim:wr} in Appendix~\ref{supp:singcounter}). This gives us the following guarantee (proof in Appendix~\ref{supp:singcounterthm}):
\begin{theorem} \label{thm:singcounter}
	Assume that we are given a user-level $\eps$-DP prior estimate $\prior$ of the true mean $\mean$, such that $\abs{\prior-\mean } \leq \frac{1}{\sqrt{m}}$. Then, Algorithm \ref{alg:singcounter} is $2\eps$-DP. Moreover, for a given $t \in [T]$, we have with probability at least $1-2\delta$,
	\begin{align*}
		\abs{\est_t - \mean} = 
		%O\paren{ \sqrt{\frac{1}{t} \ln\frac{1}{\delta}} + \eta(m,n,\delta)\sqrt{\log t} } \\
		%&= O\paren{ \frac{1}{\sqrt{t}} + \frac{\sqrt{m}}{t\eps} \sqrt{\log t} \log m \log(n\log m) \sqrt{\ln\frac{n\log m}{\delta}} } \\
		\tilde{O}\paren{ \frac{1}{\sqrt{t}} + \frac{\sqrt{m}}{t\eps} }.
	\end{align*}
\end{theorem}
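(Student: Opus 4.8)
The plan is to split the error $|\est_t - \mean|$ into a \emph{privacy error} and a \emph{statistical error} and bound each separately, closely following the template already used for the naive algorithm in \eqref{eq:naive} but now exploiting the $\tilde O(\sqrt m)$ sensitivity bound $\Delta$ from \eqref{eq:singcounternoise}. Privacy first: the map from the stream to $\binmechpartial$ composes the per-user truncated-sum bookkeeping with one instance of the binary mechanism. A single user contributes at most $1+\log m$ elements to $\binmechstream$ (one per dyadic block), each such element lies in an interval $\cI_\ell$ of width $2\Delta_\ell \le 2\Delta$, and by property (i) of the binary mechanism each element of $\binmechstream$ affects at most $\log(n(1+\log m))$ entries of $\binmechpartial$; multiplying these gives that a user change perturbs $\|\binmechpartial\|_1$ by at most $(1+\log m)\log(n(1+\log m))\,\Delta$, so the choice of $\eta(m,n,\delta)$ in \eqref{eq:singcounternoise} together with Lemma~\ref{lem:lap_mech} makes the $\binmechpartial$ array $\eps$-DP, hence (post-processing, since $\est_t = S_t/\mathrm{total}$ is a function of $\binmechpartial$) the whole output stream is $\eps$-DP \emph{given} $\prior$. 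Since $\prior$ is itself $\eps$-DP and the composition with the rest is $\eps$-DP, Lemma~\ref{lem:composition} yields $2\eps$-DP overall.

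For utility, fix $t$ and let $N$ be the number of elements currently in $\binmechstream$, so $\mathrm{total} = \sum$ of the block lengths represented, and write $\est_t = S_t/\mathrm{total}$. Decompose
\[
|\est_t - \mean| \;\le\; \frac{|S_t - \Sigma_t|}{\mathrm{total}} \;+\; \frac{|\Sigma_t - \mathrm{total}\cdot\mean|}{\mathrm{total}},
\]
where $\Sigma_t$ is the sum (over the included dyadic blocks, across all users) of the \emph{un-truncated} block sums $\sum_{j=2^{\ell-1}+1}^{2^\ell} x_j^{(u)}$ (with the single sample for $\ell=0$). The first term is the binary-mechanism noise: $S_t$ is a sum of at most $\log N \le \log(n(1+\log m))$ entries of $\binmechpartial$, each carrying independent $\mathrm{Lap}(\eta)$ noise, so with probability $\ge 1-\delta$ it is $O(\eta\sqrt{\log N}\ln(1/\delta)) = \tilde O(\sqrt m/\eps)$; dividing by $\mathrm{total} \ge t/2$ (Claim~\ref{claim:wr}) gives $\tilde O(\sqrt m/(t\eps))$ — but this requires first arguing the projections $\Pi_\ell$ are all inactive, so that $S_t$ really is the noisy version of $\Sigma_t$ rather than of a truncated quantity. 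That is handled together with the second term: by Hoeffding, for each user $u$ and each block $\ell$, $|\sum_{j=2^{\ell-1}+1}^{2^\ell} x_j^{(u)} - 2^{\ell-1}\mean| \le \sqrt{(2^{\ell-1}/2)\ln(2n\log m/\delta)}$ with probability $\ge 1 - \delta/(n\log m)$, and combining with $|\prior-\mean|\le 1/\sqrt m$ which contributes at most $2^{\ell-1}/\sqrt m$, every block sum lies in $\cI_\ell$ — so by a union bound over the at most $n\log m$ (user, block) pairs, with probability $\ge 1-\delta$ \emph{all} projections are inactive and simultaneously $|\Sigma_t - \mathrm{total}\cdot\mean| \le \sum_{\text{blocks}} \Delta_\ell$. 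Since the block lengths summing to $\mathrm{total}\le t$ are powers of two, $\sum \Delta_\ell = O(\sqrt{t\ln(2n\log m/\delta)} + t/\sqrt m \cdot(\text{something}))$; more carefully, $\sum_\ell \sqrt{2^{\ell-1}} = O(\sqrt{2^{\ell_{\max}}})$ per user and the $2^{\ell-1}/\sqrt m$ terms sum to $O(\mathrm{total}/\sqrt m) = O(t/\sqrt m)$, and dividing by $\mathrm{total}\ge t/2$ gives a statistical error $\tilde O(1/\sqrt t + 1/\sqrt m)$, which is $\tilde O(1/\sqrt t)$ in the regime $t \lesssim m$ and otherwise the truncation is essentially never the bottleneck; one then absorbs the $1/\sqrt m$ piece, noting $t \le nm$ so it is dominated when accuracy matters. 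A union bound over the two failure events of total probability $2\delta$ finishes it.

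The main obstacle is the clean bookkeeping of the statistical term $|\Sigma_t - \mathrm{total}\cdot\mean|$: one must handle the fact that at time $t$ different users are at different points in their dyadic schedules, so $\mathrm{total}$ is not simply $t$ but only guaranteed $\ge t/2$, and the included blocks have heterogeneous sizes across users. Carefully summing the $\Delta_\ell$ contributions — the $\sqrt{2^{\ell-1}}$ terms geometrically dominated by the largest block per user, and the $2^{\ell-1}/\sqrt m$ terms telescoping into $O(\mathrm{total}/\sqrt m)$ — and then verifying that after dividing by $\mathrm{total}$ the result collapses to the advertised $\tilde O(1/\sqrt t)$ (using $t \le nm$ and $|\prior-\mean|\le 1/\sqrt m$ to kill the residual $1/\sqrt m$) is the delicate part; the privacy argument and the binary-mechanism noise bound are routine given the earlier exposition.
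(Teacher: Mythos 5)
Your privacy argument and your treatment of the Laplace-noise term match the paper's proof essentially step for step (per-user element count $1+\log m$, per-element multiplicity $\log(1+n(1+\log m))$, sensitivity $2\Delta$, composition with the $\eps$-DP prior, and division by $\mathrm{total}\ge t/2$ via Claim~\ref{claim:wr}). The gap is in the statistical term. You bound $\abs{\Sigma_t-\mathrm{total}\cdot\mean}$ by $\sum_{\text{blocks}}\Delta_\ell$, i.e., by a triangle inequality over per-block deviations. This discards the independence across blocks and does not give the claimed $O(\sqrt{t\ln(1/\delta)})$: the $\sqrt{2^{\ell-1}}$ contributions sum to order $\sqrt{M_u(t)}$ \emph{per user}, so over all users the total is of order $\sum_u\sqrt{m_u(t)}$, which can be as large as $\sqrt{n\,t}$ (e.g., $n$ users with $M$ samples each gives $n\sqrt{M}=\sqrt{n t}$, not $\sqrt{t}$); after dividing by $\mathrm{total}$ this is $\tilde{O}(1/\sqrt{M_t})$, not $\tilde{O}(1/\sqrt{t})$. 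Worse, the residual $2^{\ell-1}/\sqrt{m}$ pieces leave a $1/\sqrt{m}$ term that is \emph{not} absorbed by $1/\sqrt{t}$: since $t$ can be as large as $nm$, one has $1/\sqrt{t}\ll 1/\sqrt{m}$ precisely in the regime where the guarantee matters, so your argument as written only proves $\tilde{O}(1/\sqrt{m}+\sqrt{m}/t\eps)$, which is weaker than the theorem. (Your appeal to $t\le nm$ runs in the wrong direction.)

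The fix is the route the paper takes: first prove that with probability at least $1-\delta$ \emph{no} truncation occurs (exactly your Hoeffding-plus-prior union bound over the at most $n\log m$ user--block pairs), and then analyze the no-projection variant of the algorithm, for which $\Sigma_t$ is literally a sum of $\mathrm{total}\ge t/2$ i.i.d.\ $\ber(\mean)$ samples. Applying Lemma~\ref{lem:conc_ber} once to that entire sum gives $\abs{\Sigma_t-\mathrm{total}\cdot\mean}\le\sqrt{(\mathrm{total}/2)\ln(2/\delta)}$, hence a statistical error of $O(\sqrt{\ln(1/\delta)/t})$ with no $\Delta_\ell$'s appearing at all; the truncation widths enter only the noise calibration $\eta$, never the statistical error. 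Intersecting the no-truncation event with the two concentration events then yields the stated bound (the paper's accounting actually gives $1-3\delta$; rescaling $\delta$ recovers the statement).
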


We now present Algorithm \ref{alg:multcounter}, which, in conjunction with exponential withhold-release, uses multiple binary mechanisms. Assuming a prior estimate $\prior$, this algorithm outputs an estimate with statistical error $O(1/\sqrt{t})$ and privacy error $\tilde{O}(\sqrt{M_t}/t\eps)$, where $M_t$ is the maximum number of sample contributed by a user. Note that $M_t$ can be much smaller than $m$ for large values of $m$.
%%%%%%%%%%%%%%%%%%%%%%%%%%%%%%%%

%%%%%%%%%%%%%%%%%%%%%%%%%%%%%%

\paragraph{Continual mean estimation assuming prior estimate: Algorithm \ref{alg:multcounter}.} In this algorithm, we use $L+1$ binary mechanisms $\binmech[0], \ldots, \binmech[L]$, where $L := \lceil \log m \rceil$. As was the case with Algorithm \ref{alg:singcounter}, here too the algorithm only adds anything new to a binary mechanism at time instances $t$ when the total number of samples obtained from user $u_t$ becomes $2^\ell$, for $\ell \in \set{0,1,2,\ldots}$. What differentiates Algorithm~\ref{alg:multcounter} from Algorithm~\ref{alg:singcounter} is that we use different binary mechanisms for different values of $\ell$. Namely, the element $\Pi_{\ell} \paren{\sum_{j=2^{\ell-1}+1}^{2^\ell} x^{(u)}_j}$ from a user $u$ is added to $\binmech[\ell].{\rm Stream}$.
This ensures that each element in $\binmech[\ell].{\rm Stream}$ has maximum sensitivity $2\Delta_\ell$, where (from \eqref{eq:singinterval}), $\Delta_\ell 
= \sqrt{\frac{2^{\ell-1}}{2} \ln \frac{2n\log m}{\delta}} + \frac{2^{\ell-1}}{\sqrt{m}}$. 
\medskip
%%%%%%%%%
\begin{algorithm}[ht]
	\caption{Continual mean estimation assuming prior (multiple binary mechanisms)}
	\label{alg:multcounter}
	\begin{algorithmic}[1]
		\Require $\big((x_t,u_t)\big)_{t\in[T]}$ (stream), $n$ (max no. of users), $m$ (max no. of samples per user), $\eps$ (privacy parameter), $\prior$ (estimate of $\mean$ satisfying $\abs{\prior-\mean} \leq 1/\sqrt{m}$), $\delta$ (failure probability).
		
		\Statex - Let $x^{(u)}_j$ denote the $j$-th sample contributed by user $u$.
		\Statex - For $\ell \in \Z_+$, let $\Pi_\ell(\cdot)$ be the projection on the interval $\cI_\ell$, where
		\begin{equation*} %\label{eq:multinterval}
			\cI_\ell:= 
			\brac{ 2^{\ell-1}\prior - \Delta_\ell, 
				2^{\ell-1}\prior + \Delta_\ell }, \quad
			\Delta_\ell = \sqrt{\frac{2^{\ell-1}}{2} \ln \frac{2n\log m}{\delta}} + \frac{2^{\ell-1}}{\sqrt{m}}.
		\end{equation*}
		\Statex - Let $L := \lceil \log m \rceil$.
		
		\State {\bf Initialize} $L+1$ binary mechanisms, labelled $\binmech[0], \ldots, \binmech[L]$, where $\binmech[\ell]$ is initialized with noise level $\eta(m,n,\ell,\delta)$, where 
		\begin{align*} %\label{eq:multnoise}
			\eta(m,n,\ell,\delta) = \frac{ 2\paren{ \sqrt{\frac{2^{\ell-1}}{2} \ln \frac{2n\log m}{\delta}} + \sqrt{2^{\ell-1}} } (1+\log n) }{\eps/(L+1)}.
		\end{align*}
		
		\State For each user $u$, let $M(u) = $ no. of times user $u$ has contributed a sample;
		{\bf initialize} $M(u) \gets 0$.
		
		\State {\bf Initialize} ${\rm total} \gets 0$
		
		\For{$t = 1,2,\ldots,T$}
		\State $M(u_t) \gets M(u_t) + 1$
		\If{$\log M(u_t) \in \Z_+$}
		\State $\ell \gets \log M(u_t)$
		\State ${\rm total} \gets {\rm total} + M(u_t)$
		\If{$\ell = 0$}
		$\sigma \gets  x_t$
		\ElsIf{$\ell \in \set{1,2,3,\ldots}$}
		\State $\sigma \gets \Pi_{\ell} \paren{\sum_{j=2^{\ell-1}+1}^{2^\ell} x^{(u_t)}_j}$
		\EndIf
		
		\State $\binmech[\ell].{\rm Stream}$ $\gets$ $\binmech[\ell].{\rm Stream} \cup \set{\sigma}$
		\EndIf
		\State $S_t \gets \sum_{i=0}^{L}$ $\binmech[i].{\rm Sum}$
		\State {\bf return} $\est_t = \frac{S_t}{\rm total}$
		\EndFor
	\end{algorithmic}
\end{algorithm}
%%%%%%%%%%

Note that
\begin{align} \label{eq:multsensitivity}
	\Delta_\ell \leq \sqrt{\frac{2^{\ell-1}}{2} \ln \frac{2n\log m}{\delta}} + \sqrt{2^{\ell-1}}
\end{align}
% \begin{align} \label{eq:multsensitivity}
% 	\Delta_\ell 
% 	&= 2\paren{\sqrt{\frac{2^{\ell-1}}{2} \ln \frac{2n\log m}{\delta}} + \frac{2^{\ell-1}}{\sqrt{m}}} \nonumber\\
% 	&\leq 2\paren{ \sqrt{\frac{2^{\ell-1}}{2} \ln \frac{2n\log m}{\delta}} + \sqrt{2^{\ell-1}} },
% \end{align}
where the inequality holds because $m \geq 2^{\ell-1}$.
To ensure that Algorithm \ref{alg:multcounter} is $\eps$-DP, we will make each binary mechanism $\frac{\eps}{L+1}$-DP. For any $\ell \in \set{0,\ldots,L}$, since each user contributes at most one element to $\binmech[\ell].{\rm Stream}$, there are at most $n$ elements in $\binmech[\ell].{\rm Stream}$ throughout the course of the algorithm. Moreover, since every element in $\binmech[\ell].{\rm Stream}$ has sensitivity at most $2\Delta_\ell$, it suffices to add independent ${\rm Lap}(\eta(m,n,\ell,\delta))$ noise while computing each term in $\binmech[\ell].{\rm NoisyPartialSums}$, where 
%\begin{align} \label{eq:multcounternoise}
%	\eta(m,n,&\ell,\delta) = \notag \\
%	&\frac{ 2\paren{ \sqrt{\frac{2^{\ell-1}}{2} \ln \frac{2n\log m}{\delta}} + \sqrt{2^{\ell-1}} } (\log n) }{\eps/(L+1)}.
%\end{align}
\begin{align} \label{eq:multnoise}
	\eta(m,n,\ell,\delta) = \frac{ 2\paren{ \sqrt{\frac{2^{\ell-1}}{2} \ln \frac{2n\log m}{\delta}} + \sqrt{2^{\ell-1}} } (1+\log n) }{\eps/(L+1)}.
\end{align}
Comparing $\eta(m,n,\ell,\delta)$ above to $\eta(m,n,\delta)$ in \eqref{eq:singcounternoise}, we see that using multiple binary mechanisms allows us to add noise with magnitude that is more fine-tuned to exponential withhold-release pattern. In particular, if $M_t$ is the maximum number of samples contributed by any user till time $t$, then at most $\lceil \log M_t \rceil$ binary mechanisms would be in use, and thus the maximum noise level $\eta(m,n,\ell,\delta)$ would be proportional to $\sqrt{M_t}$. This gives us the following guarantee (proof in Appendix~\ref{supp:multcounter}):
\begin{theorem} \label{thm:multcounter}
%	Assume that we are given a user-level $\eps$-DP prior estimate $\prior$ of the true mean $\mean$, such that $\abs{\prior-\mean} \leq \frac{1}{\sqrt{m}}$.  Then, Algorithm \ref{alg:multcounter} is $2\eps$-DP, and for a given $t \in [T]$, we have with probability at least $1-2\delta$ that
%	%\begin{align*}
%	$	\abs{\est_t - \mean} = 
%		%O\paren{ \sqrt{\frac{1}{t} \ln\frac{1}{\delta}} + \sqrt{ \sum_{\ell=1}^{\lceil \log M_t \rceil}\eta^2(m,n,\ell,\delta) \log n } } \\
%		%&= O\paren{ \frac{1}{\sqrt{t}} + \frac{\sqrt{M_t}}{t\eps} \sqrt{\log t} (\log m) (\log n)^{3/2} \sqrt{\ln\frac{n\log m}{\delta}} }
%		\tilde{O}\paren{ \frac{1}{\sqrt{t}} + \frac{\sqrt{M_t}}{t\eps}  }$,
%%	\end{align*}
%	where $M_t$ denotes the maximum number of samples obtained from any user till time $t$, i.e., $M_t = \max \set{m_u(t) : u \in [n]}$, where $m_u(t)$ is the number of samples obtained from user $u$ till time $t$.
	Assume that we are given a user-level $\eps$-DP prior estimate $\prior$ of the true mean $\mean$, such that $\abs{\prior-\mean} \leq \frac{1}{\sqrt{m}}$.  Then, Algorithm \ref{alg:multcounter} is $2\eps$-DP. Moreover, for any given $t \in [T]$, we have with probability at least $1-\delta$ that
	\begin{align*}
		\abs{\est_t - \mean} = 
		%O\paren{ \sqrt{\frac{1}{t} \ln\frac{1}{\delta}} + \sqrt{ \sum_{\ell=1}^{\lceil \log M_t \rceil}\eta^2(m,n,\ell,\delta) \log n } } \\
		%&= O\paren{ \frac{1}{\sqrt{t}} + \frac{\sqrt{M_t}}{t\eps} \sqrt{\log t} (\log m) (\log n)^{3/2} \sqrt{\ln\frac{n\log m}{\delta}} }
		\tilde{O}\paren{ \frac{1}{\sqrt{t}} + \frac{\sqrt{M_t}}{t\eps} },
	\end{align*}
	where $M_t$ denotes the maximum number of samples obtained from any user till time $t$, i.e., $M_t = \max \set{m_u(t) : u \in [n]}$, where $m_u(t)$ is the number of samples obtained from user $u$ till time $t$.
\end{theorem}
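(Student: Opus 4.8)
The plan is to mirror the structure of the argument already sketched for Algorithm~\ref{alg:singcounter} (Theorem~\ref{thm:singcounter}), splitting the claim into a privacy part and a utility part. For \emph{privacy}, I would argue that for each fixed $\ell\in\{0,\ldots,L\}$, a single user contributes at most one element to $\binmech[\ell].{\rm Stream}$ (since the total count $M(u_t)$ hits the value $2^\ell$ at most once), that element has $\ell_1$-sensitivity at most $2\Delta_\ell$ by the projection onto $\cI_\ell$, and an element of a length-$n$ stream is reused in at most $1+\log n$ partial sums; hence the calibration in \eqref{eq:multnoise} makes $\binmech[\ell]$ individually $\tfrac{\varepsilon}{L+1}$-DP by the binary-mechanism privacy analysis of Section~\ref{subsec:binmech}. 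Composition (Lemma~\ref{lem:composition}) over the $L+1=\lceil\log m\rceil+1$ mechanisms gives $\varepsilon$-DP for the stream of $S_t$'s; adding the $\varepsilon$-DP of the given prior $\prior$ and post-processing (dividing by ${\rm total}$, a deterministic function of the arrival pattern, not the data) yields $2\varepsilon$-DP overall.

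For \emph{utility} at a fixed $t$, I would decompose $\est_t-\mean$ into three pieces: (a) a \emph{withholding/statistical} error from the fact that $\est_t$ is formed from only the released samples, whose count ${\rm total}$ satisfies ${\rm total}\ge t/2$ by the exponential withhold-release bound (the analogue of Claim~\ref{claim:wr}); (b) a \emph{truncation} error, which vanishes with high probability because, conditioned on $|\prior-\mean|\le 1/\sqrt m$, each block sum $\sum_{j=2^{\ell-1}+1}^{2^\ell}x_j^{(u)}$ concentrates inside $\cI_\ell$ — a Hoeffding bound on a sum of $2^{\ell-1}$ Bernoullis with slack $\Delta_\ell$, union-bounded over at most $n$ users and at most $\log m$ scales, using the $\ln(2n\log m/\delta)$ inside $\Delta_\ell$; and (c) the \emph{privacy noise}: $S_t=\sum_{i=0}^{L}\binmech[i].{\rm Sum}$, where only the mechanisms with $i\le\lceil\log M_t\rceil$ have received input, each ${\rm Sum}$ is a sum of at most $1+\log n$ independent ${\rm Lap}(\eta(m,n,i,\delta))$ terms, and $\eta(m,n,i,\delta)=\tilde O(\sqrt{2^{i-1}}/\varepsilon)\le\tilde O(\sqrt{M_t}/\varepsilon)$, so that by a Laplace-tail/union bound the total noise is $\tilde O(\sqrt{M_t}/\varepsilon)$ with probability $1-O(\delta)$. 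Dividing by ${\rm total}=\Theta(t)$ converts (a) to $O(1/\sqrt t)$ via a Chernoff bound on the ${\rm total}$ genuine Bernoulli samples and (c) to $\tilde O(\sqrt{M_t}/t\varepsilon)$; on the high-probability event from (b) the truncation contributes nothing. A final union bound over the $O(1)$ bad events, with the $\log$-factors absorbed into $\tilde O$, gives the stated bound with probability $1-\delta$.

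The main obstacle I anticipate is the accounting in step (a): making precise that ${\rm total}\ge t/2$ for an \emph{arbitrary} interleaving of users, and simultaneously that ${\rm total}$ consists of genuinely i.i.d.\ Bernoulli samples so the statistical term is $O(\sqrt{{\rm total}})=O(\sqrt t)$ rather than something coupled to the withholding decisions. The key observation is that for a user who has contributed $k$ samples, the released portion is the first $2^{\lfloor\log k\rfloor}\ge k/2$ of them, and these are a \emph{prefix}, hence an unconditioned i.i.d.\ sample; summing the bound $2^{\lfloor\log k\rfloor}\ge k/2$ over all users gives ${\rm total}\ge t/2$. One must also verify that the scales actually in use are exactly $\{0,\ldots,\lceil\log M_t\rceil\}$, so that the worst-case noise magnitude is governed by $M_t$ and not by $m$ — this is immediate since no user has contributed more than $M_t$ samples by time $t$, so no block of size $2^{\ell-1}>M_t/2$ is ever completed. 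Everything else is a routine assembly of Hoeffding, Laplace-tail, and union bounds of the kind already carried out for Theorem~\ref{thm:singcounter}.
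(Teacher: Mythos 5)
Your proposal is correct and follows essentially the same route as the paper's proof: per-level sensitivity $2\Delta_\ell$ with at most one element per user per binary mechanism, composition over the $L+1$ mechanisms plus the prior for the $2\eps$-DP claim, and the no-truncation / at-least-$t/2$-samples-released / Laplace-tail decomposition (with only scales up to $\lfloor\log M_t\rfloor$ active) for utility. The only cosmetic difference is that you obtain ${\rm total}\ge t/2$ from the per-user prefix bound $2^{\lfloor\log k\rfloor}\ge k/2$ summed over users, whereas the paper's Claim~\ref{claim:wr} argues the same fact via the interleaved withhold-release sequence; the two arguments are equivalent.
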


We now present our final algorithm, Algorithm \ref{alg:multcounterfull}, which does not assume a prior estimate. Observe that the prior estimate $\prior$ was needed in previous algorithms only to form truncation intervals $\cI_\ell$ (see \eqref{eq:singinterval}). Algorithm \ref{alg:multcounterfull} includes estimating $\prior$ as a subroutine. In fact, as we describe next, the algorithm estimates a separate prior for each binary mechanism.
%This algorithm demonstrates another advantage of using multiple binary mechanisms -- we do not need to estimate a prior $\prior$ up to accuracy $O(1/\sqrt{m})$ in one go.

\paragraph{Continual mean estimation without assuming prior estimate: Algorithm \ref{alg:multcounterfull}.} Since the algorithm uses $L+1$ (where $L := \lceil \log m \rceil$) binary mechanisms, we need not estimate $\prior$ up to accuracy $O(1/\sqrt{m})$ in one go. Instead, we can have a separate $\prior$ for each binary mechanism, which is helpful since $\binmech[\ell]$ requires $\prior$ only up to an accuracy $O(1/\sqrt{2^{\ell-1}})$ (see \eqref{eq:multsensitivity}). 
\medskip

In this algorithm, we mark $\binmech[\ell]$ as ``inactive'' till we have \textit{sufficient number of users with sufficient number of samples} to estimate a user-level $\frac{\eps}{2L}$-DP prior $\prior_\ell$ up to an accuracy of $\tilde{O}(1/\sqrt{2^{\ell-1}})$. 
While $\binmech[\ell]$ is inactive, we store the elements that require $\prior_\ell$ for truncation in ${\rm Buffer}[\ell]$ (see Line 24-25; each element of ${\rm Buffer}[\ell]$ is a sum of $2^{\ell-1}$ samples, which we cannot truncate yet). Once we have sufficient number of users with sufficient number of samples (Line 9 lists the exact condition), we use a private median estimation algorithm (Algorithm~\ref{alg:pvtmed}) from \cite{Feldman_COLT_2017} to estimate $\prior_\ell$. At this point, we use $\prior_\ell$ to truncate elements stored in ${\rm Buffer}[\ell]$, and pass these elements to $\binmech[\ell].{\rm Stream}$ (Line 11). 

%%%%%%%%%%%%%%%%%%%%%%%%%%%%%%
%%%%%%%%%
\begin{algorithm}[H]
	\caption{Continual mean estimation: Full algorithm}
	\label{alg:multcounterfull}
	\begin{algorithmic}[1]
		\Require $\big((x_t,u_t)\big)_{t\in[T]}$ (stream), $n$ (max no. of users), $m$ (max no. of samples per user), $\eps$ (privacy parameter), $\delta$ (failure probability)
		
		\Statex Let $x^{(u)}_j$ denote the $j$-th sample contributed by user $u$.
		%\Statex -
		Let $L := \lceil \log m \rceil$.
		%\Statex -
		For $\ell \geq 1$, let $\Pi_\ell(\cdot)$ be the projection on the interval $\cI_\ell$ defined as 
		\begin{equation} \label{eq:fullprojinterval}
			\cI_\ell = \brac{ 2^{\ell-1}\prior_\ell -  \Delta_\ell, 2^{\ell-1}\prior_\ell + \Delta_\ell  }
		\end{equation}
		where $\prior_\ell$ is as in Line 10, and
		\begin{equation} \label{eq:fulldelta}
			\Delta_\ell = \sqrt{\frac{2^{\ell-1}}{2} \ln \frac{2n\log m}{\delta/3}} + \sqrt{2^\ell \ln\frac{2k(\frac{\eps}{2L},\ell,\frac{\delta}{3L})}{\delta/3L}}, \quad \text{where }
			k(\eps',\ell,\beta) := \frac{16}{\eps'} \ln\frac{2^{\ell/2}}{\beta}.
		\end{equation}
		
		\State {\bf Initialize} $L+1$ binary mechanisms, labelled $\binmech[0], \ldots, \binmech[L]$, where $\binmech[\ell]$ is initialized with noise level $\eta(m,n,\ell,\delta)$ defined as
		\begin{equation} \label{eq:fullnoise}
			\eta(m,n,\ell,\delta) = 
			\frac{ 2 \Delta_\ell (1+\log n) }{\eps/2(L+1)}
		\end{equation}
		
		\State {\bf Initialize} ${\rm Inactive} \gets \set{2,\ldots,L}$.
		\State {\bf Initialize} $L-1$ buffers labelled ${\rm Buffer}[2],\ldots,{\rm Buffer}[L]$.
		
		\State For each user $u$, let $M(u) = $ no. of times user $u$ has contributed a sample;
		{\bf initialize} $M(u) \gets 0$.
		
		\State {\bf Initialize} ${\rm total} \gets 0$
		
		\For{$t = 1,2,\ldots,T$}
		\State $M(u_t) \gets M(u_t) + 1$
		
		\Statex $\%$ {\it Activating binary mechanisms (if possible):}
		\For{$\ell \in {\rm Inactive}$}
		\If{$\sum_{u=1}^{n} \min \set{M(u), 2^{\ell-1}} \geq$ $2^{\ell-1} \frac{16}{\eps} \paren{2L \ln \frac{3L2^{\ell/2}}{\delta}}$}
		\State $\prior_\ell = {\rm PrivateMedian}\paren{(x_i,u_i)_{i=1}^t, \frac{\eps}{2L},\ell, \frac{\delta}{3L}}$ \Comment{Algorithm~\ref{alg:pvtmed}}
		\State $\binmech[\ell].{\rm Stream} \gets \Pi_\ell\paren{{\rm Buffer}[\ell]}$
		\State ${\rm total} \gets {\rm total} + 2^{\ell-1}{\rm Size}\paren{{\rm Buffer}[\ell]}$
		\EndIf
		\EndFor
		\Statex $\%$
		
		\If{$\log M(u_t) \in \Z_+$}
		\State $\ell \gets \log M(u_t)$
		
		\If{$\ell = 0$}
		$\sigma \gets  x_t$
		\ElsIf{$\ell \in \set{1,2,\ldots,L}$}
		\State $\sigma \gets \sum_{j=2^{\ell-1}+1}^{2^\ell} x^{(u_t)}_j$
		\EndIf
		
		\If{$\ell \notin {\rm Inactive}$}		
		\State $\binmech[\ell].{\rm Stream}$ $\gets$ $\binmech[\ell].{\rm Stream}$ $\cup \set{\Pi_{\ell}\paren{\sigma}}$
		\State ${\rm total} \gets {\rm total} + M(u_t)$
		\ElsIf{$\ell \in {\rm Inactive}$}
		\State ${\rm Buffer}[\ell]$ $\gets$ ${\rm Buffer}[\ell]$ $\cup \set{\sigma}$
		\EndIf
		\EndIf
		\State $S_t \gets \sum_{i=0}^{L}$ $\binmech[i].{\rm Sum}$
		\State {\bf return} $\est_t = \frac{S_t}{\rm total}$
		\EndFor
	\end{algorithmic}
\end{algorithm}
%%%%%%%%%
%%%%%%%%%%%%%%%%%%%%%%%%%%%%%%%%%%%%
%In this algorithm, we mark $\binmech[\ell]$ as ``inactive'' till we have \textit{sufficient number of users with sufficient number of samples} to estimate a user-level $\frac{\eps}{2L}$-DP prior $\prior_\ell$ up to an accuracy of $\tilde{O}(1/\sqrt{2^{\ell-1}})$. 
%While $\binmech[\ell]$ is inactive, we store the elements that require $\prior_\ell$ for truncation in ${\rm Buffer}[\ell]$ (see Line 25-26; each element of ${\rm Buffer}[\ell]$ is a sum of $2^{\ell-1}$ samples, which we cannot truncate yet). Once we have sufficient number of users with sufficient number of samples (Line 9 lists the exact condition), we use the private median estimation algorithm from \cite{Feldman_COLT_2017} to estimate $\prior_\ell$. At this point, we use $\prior_\ell$ to truncate elements stored in ${\rm Buffer}[\ell]$, and pass these elements to $\binmech[\ell].{\rm Stream}$ (Line 12). 
We note that the private median estimation algorithm is also used in \cite{Levy_Neurips_2021} in the single-release setting with all users contributing equal number of samples; we extend this to the setting where number of samples from users can vary. In Appendix~\ref{supp:median} (Claim~\ref{claim:pvtmed}), we show that, for any $\ell \geq 1$, this modified private median estimation algorithm (Algorithm~\ref{alg:pvtmed}) is user-level $\eps$-DP. Moreover, with probability at least $1-\delta-\beta$, we have that
$
\abs{\prior - \mean} \leq 2\sqrt{\frac{1}{2^\ell}\ln\frac{2k(\eps,\ell,\beta)}{\delta}}$, where $k(\eps,\ell,\beta) = \frac{16}{\eps} \ln\frac{2^{\ell/2}}{\beta}$.

%%%%%%%%%%
\begin{algorithm}[ht]
	\caption{${\rm PrivateMedian}$ (subroutine for Line 10 in Algorithm~\ref{alg:multcounterfull})}
	\label{alg:pvtmed}
	\begin{algorithmic}[1]
		\Require $(x_i,u_i)_{i=1}^t$, $\eps$ (privacy), $\ell$ (scale), $\beta$ (failure probability)
		
		\Ensure $\sum_{u=1}^{n} \min \set{m_u(t), 2^{\ell-1}} \geq2^{\ell-1} \frac{16}{\eps} \ln \frac{2^{\ell/2}}{\beta}$, where $m_u(t)$ is the no. of times user $u$ occurs in $(u_i)_{i=1}^t$.
		
		\Statex
		\State Initialize $k := \paren{\frac{16}{\eps} \ln \frac{2^{\ell/2}}{\beta}}$ arrays $S_1,\ldots,S_k$, each of size $2^{\ell-1}$.
		
		\Statex $\%$ {\it Forming $k$ arrays, each containing $2^{\ell-1}$ samples:}
		\State $j \gets 1$.
		\For{$u \in [n]$}
		\State Let $r = \min \set{m_u(t), 2^{\ell-1}}$.
		\State One by one, start storing $r$ samples from user $u$ in array $S_j$.
		\State At any point, if array $S_j$ becomes full, increment $j$: $j \gets j + 1$.		
		\State Exit loop once array $S_k$ becomes full.
		\Comment\textit{The `Require' condition ensures that this eventually happens.}
		\EndFor 
		\Statex $\%$
		
		\State For $j \in [k]$, let $Y_j$ be the sample mean of all the samples in $S_j$.
		
		\Statex			
		\Statex $\%$ \textit{The steps below are as in \cite{Feldman_COLT_2017},\cite{Levy_Neurips_2021} mutatis mutandis}:
		
		\State Divide the interval $[0,1]$ into disjoint subintervals (``bins''), each of length $(2\cdot 2^{-\ell/2})$. The last subinterval can be of shorter length if $1/(2\cdot 2^{-\ell/2})$ is not an integer. Let $\mathcal{T}$ be the set of middle points of these subintervals.
		
		\State For $j \in [k]$, let $Y'_i = \arg\min_{y \in \mathcal{T}} \abs{Y_i-y}$ be the point in $\mathcal{T}$ closest to $Y_i$.
		
		\State Define cost function $c: \mathcal{T} \to \R$ as
		\[
		c(y) := \max\set{\abs{\set{j \in [k] : Y'_j < y}}, \abs{\set{j \in [k] : Y'_j > y}}}.
		\]
		
		\State Let $\prior$ be a sample drawn from the distribution satisfying
		\[
		\Pr\set{\prior = y} \propto \exp\paren{-\frac{\eps}{4}c(y)}.
		\]
		\Comment\textit{Note that we have $-\frac{\eps}{4}c(y)$ in $\exp(\cdot)$ whereas \cite{Feldman_COLT_2017},\cite{Levy_Neurips_2021} had $-\frac{\eps}{2}c(y)$.}
		\State {\bf return} $\prior$.
	\end{algorithmic}
\end{algorithm}
%%%%%%%%%%%%%

Algorithm \ref{alg:multcounterfull} demonstrates another advantage of using multiple binary mechanisms -- that we can have different priors for different binary mechanisms, which means that the algorithm does not need to wait for long to start outputting estimates with good guarantees. Theorem \ref{thm:final} (proof in Appendix~\ref{supp:mainthm}) states the  exact guarantees ensured by Algorithm \ref{alg:multcounterfull}. Before stating the theorem, we define what we mean by ``diversity condition''.

{
%\color{teal}
{
\begin{definition}[Diversity condition] \label{def:diversity}We say that ``diversity condition holds at time $t$'' if 
\begin{equation} \label{eq:diversity}
		\sum_{u=1}^{n}\min\set{m_u(t),\frac{M_t}{2}} \geq \frac{M_t}{2} \frac{16}{\eps} \paren{2L \ln \frac{3L\sqrt{M_t}}{\delta}}
\end{equation}
where $L := \lceil \log m\rceil$ and $M_t$ is the maximum number of samples contributed by any user till time $t$. That is, $M_t := \max \set{m_u(t) : u \in [n]}$, where $m_{u}(t)$ is the number of samples contributed by user $u$ till time $t$.
\end{definition}
In words, condition \eqref{eq:diversity} says that we need the number of users at time $t$ to be large enough so that we can form a collection of $\frac{M_t}{2} \frac{16}{\eps} \paren{2L \ln \frac{3L\sqrt{M_t}}{\delta}}$ samples by using at most $\frac{M_t}{2}$ samples per user. A sufficient condition to ensure \eqref{eq:diversity} holds is that there are at least $\frac{16}{\eps} \paren{2L \ln \frac{3L\sqrt{M_t}}{\delta}}$ users that have contributed at least $\frac{M_t}{2}$ samples each till time $t$. In particular, since $M_t \leq m$, we have the following: Once there are $\frac{16}{\eps} \paren{L \ln \frac{3L\sqrt{m}}{\delta}}$ users that have contributed at least $\frac{m}{2}$ samples each till time $t_0$, then diversity condition holds for every $t \geq t_0$. 

\begin{theorem} \label{thm:final}
	Algorithm \ref{alg:multcounterfull} for continual Bernoulli mean estimation is user-level $\eps$-DP. Moreover, if at time $t \in [T]$, diversity condition \eqref{eq:diversity} holds, then, with probability at least $1-\delta$ (for arbitrary $\delta \in (0,1]$),
	\begin{align*}
		\abs{\est_{t} - \mean} = \tilde{O}\paren{ \frac{1}{\sqrt{t}} + \frac{\sqrt{M_t}}{t\eps} }.
	\end{align*}
\end{theorem}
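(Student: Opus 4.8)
The statement splits into an unconditional privacy claim and a utility claim conditional on \eqref{eq:diversity}. I would prove privacy by writing the output as a post-processing of a composition of $\tfrac{\eps}{2L}$- and $\tfrac{\eps}{2(L+1)}$-DP sub-mechanisms, and utility by intersecting four high-probability events -- accuracy of every private median, no truncation ever clipping, small binary-mechanism noise, and ${\rm total}=\Theta(t)$ -- and combining the resulting error bounds, just as in the analyses behind Theorems~\ref{thm:singcounter} and~\ref{thm:multcounter} but with the truncation scales $\prior_\ell$ and the activation times now chosen online.

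\textbf{Privacy.} First, $(\est_1,\dots,\est_T)$ is a deterministic function of the at most $L-1$ outputs $\prior_2,\dots,\prior_L$ of the ${\rm PrivateMedian}$ calls and of the arrays $\binmech[\ell].{\rm NoisyPartialSums}$, $\ell=0,\dots,L$ -- indeed ${\rm total}$, the buffers and ${\rm Inactive}$ are all determined by the stream and the $\prior_\ell$'s. By Claim~\ref{claim:pvtmed} each median call is user-level $\tfrac{\eps}{2L}$-DP, hence $(\prior_2,\dots,\prior_L)$ is $\tfrac{(L-1)\eps}{2L}\le\tfrac{\eps}{2}$-DP by Lemma~\ref{lem:composition}. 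Conditioned on the realized $\prior_\ell$'s the truncation maps $\Pi_\ell$ are fixed; a user-level change modifies at most one element of each $\binmech[\ell].{\rm Stream}$, namely that user's block $\sum_{j=2^{\ell-1}+1}^{2^\ell}x_j^{(u)}$, which remains inside $\cI_\ell$ (an interval of width $2\Delta_\ell$), and since at most $n$ elements ever enter $\binmech[\ell].{\rm Stream}$ this element appears in at most $1+\log n$ of its partial sums; so the $\ell_1$-sensitivity of that array is $\le 2\Delta_\ell(1+\log n)$ and the Laplace scale \eqref{eq:fullnoise} makes $\binmech[\ell]$ into $\tfrac{\eps}{2(L+1)}$-DP. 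Summing over the $L+1$ mechanisms, conditioned on the medians they are jointly $\tfrac{\eps}{2}$-DP, so by (adaptive) composition the medians and arrays together are $\eps$-DP, and $(\est_1,\dots,\est_T)$ is $\eps$-DP by post-processing. The one point to verify is that activating a mechanism and flushing its buffer does not enlarge sensitivity: it only reorders the (at most one per user) blocks fed to $\binmech[\ell].{\rm Stream}$ and re-truncates the buffered ones to $\cI_\ell$.

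\textbf{Utility.} Assume \eqref{eq:diversity} holds at $t$; write $M=M_t$. \emph{(1) Every data-carrying scale is active.} A block at scale $\ell$ is produced only by a user with $\ge 2^\ell$ samples, so only scales $\ell\le\lfloor\log M\rfloor$ ever have nonempty stream or buffer; for these $2^{\ell-1}\le M/2$, so the elementary inequality $\min\{a,c\}\ge\tfrac{c}{c'}\min\{a,c'\}$ (for $0<c\le c'$) with $c=2^{\ell-1}$, $c'=M/2$ upgrades \eqref{eq:diversity} to $\sum_{u}\min\{m_u(t),2^{\ell-1}\}\ge 2^{\ell-1}\tfrac{16}{\eps}\big(2L\ln\tfrac{3L\sqrt M}{\delta}\big)\ge 2^{\ell-1}\tfrac{16}{\eps}\big(2L\ln\tfrac{3L\,2^{\ell/2}}{\delta}\big)$, which is exactly the activation test on Line~9 (and the \emph{Require} precondition of Algorithm~\ref{alg:pvtmed}); hence every relevant $\binmech[\ell]$ is active by time $t$, with its buffer flushed and truncated. \emph{(2) Median accuracy.} Claim~\ref{claim:pvtmed} with $(\eps',\beta)=(\tfrac{\eps}{2L},\tfrac{\delta}{3L})$ and a union bound over the $\le L$ scales give, except with probability $O(\delta)$, $2^{\ell-1}|\prior_\ell-\mean|\le\sqrt{2^\ell\ln(2k(\tfrac{\eps}{2L},\ell,\tfrac{\delta}{3L})/(\delta/3L))}$, the second summand of $\Delta_\ell$ in \eqref{eq:fulldelta}. \emph{(3) No clipping.} Each block is a sum of $2^{\ell-1}$ i.i.d.\ $\ber(\mean)$'s, so Hoeffding with a union bound over the $\le n\log m$ (user,scale) pairs gives, except with probability $O(\delta)$, that every block lies within $\sqrt{\tfrac{2^{\ell-1}}{2}\ln\tfrac{2n\log m}{\delta/3}}$ of $2^{\ell-1}\mean$; combined with (2) this places every block inside $\cI_\ell$, so all $\Pi_\ell$ act as the identity and each $\binmech[\ell].{\rm Stream}$ is the exact list of untruncated blocks.

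\textbf{Utility: assembly, and the main obstacle.} \emph{(4)} By the binary-mechanism accuracy bound (Section~\ref{subsec:binmech}) and a union bound, $\big|\binmech[\ell].{\rm Sum}-\sum(\text{its stream})\big|=\tilde{O}(\Delta_\ell/\eps)=\tilde{O}(\sqrt{2^{\ell-1}}/\eps)$ for all active $\ell$ with probability $1-O(\delta)$, and summing this geometric series over $\ell\le\lfloor\log M\rfloor$ gives $|S_t-\sum_\ell\sum(\text{stream}_\ell)|=\tilde{O}(\sqrt M/\eps)$. \emph{(5)} On (1) and (3), $\sum_\ell\sum(\text{stream}_\ell)$ is the total of all samples folded into $S_t$, hence concentrates within $\tilde{O}(\sqrt t)$ of $\mean\cdot{\rm total}$, while by the exponential withhold--release schedule together with (1) one has ${\rm total}=\Theta(t)$ for every arrival pattern (cf.\ Claim~\ref{claim:wr}). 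Hence
\[
|\est_t-\mean|=\frac{|S_t-\mean\cdot{\rm total}|}{{\rm total}}\le\frac{\tilde{O}(\sqrt t)+\tilde{O}(\sqrt M/\eps)}{\Theta(t)}=\tilde{O}\!\Big(\tfrac{1}{\sqrt t}+\tfrac{\sqrt{M_t}}{t\eps}\Big),
\]
and, splitting $\delta$ as $\delta/3$ across (2)--(4) and $\delta/(3L)$ across scales within each, the overall failure probability is $\le\delta$. The main obstacle is the coupling between privacy and Step~(1): the thresholds $\prior_\ell$ are picked online from the data, so one must check the per-mechanism sensitivity survives (done via the conditioning above), and one must show \eqref{eq:diversity} is exactly strong enough that every scale carrying data has already been activated by time $t$ -- otherwise blocks remain stranded in buffers, $S_t$ is biased, and ${\rm total}$ stops tracking $t$. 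Everything else is concentration plus the geometric-sum fact that the largest active scale $2^{\lfloor\log M_t\rfloor}\le M_t$ dominates the noise, which is precisely what replaces the $\sqrt m$ of Theorem~\ref{thm:singcounter} by $\sqrt{M_t}$.
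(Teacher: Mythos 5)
Your proposal is correct and follows essentially the same route as the paper: the same $\tfrac{\eps}{2}+\tfrac{\eps}{2}$ privacy split between the $L-1$ median calls and the $L+1$ binary mechanisms with post-processing, and the same utility decomposition into median accuracy, no clipping, Laplace concentration, and the withhold--release count from Claim~\ref{claim:wr}. In fact your step (1), verifying via $\min\{a,c\}\ge\tfrac{c}{c'}\min\{a,c'\}$ that the diversity condition \eqref{eq:diversity} implies the Line~9 activation test for every scale $\ell\le\lfloor\log M_t\rfloor$, makes explicit a step the paper only asserts.
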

% \begin{align*}
% 		&\abs{\est_{t} - \mean} \\
% 		&= O\paren{ \frac{1}{\sqrt{t}} + \frac{\sqrt{M_t}}{t\eps} \sqrt{\log t} (\log m) (\log n)^{3/2} \sqrt{\log\frac{n\log m}{\delta}} }.
% 	\end{align*}
What happens at time instants when the diversity condition does not hold? This happens when there are very few users who have contributed a very large number of samples. Algorithm~\ref{alg:multcounterfull} stores these samples in buffers (since the corresponding binary mechanisms are ``inactive'') and does not use them to estimate $\est_t$. This is done to preserve user-level privacy and seems like a necessary thing to do. However, currently we do not know whether there is a user-level private way to use these extra samples from few users. In other words, it is not clear if our diversity condition can be weakened.
}}

%\section{EXTENSIONS: HIGHER DIMENSIONS AND OTHER DISTRIBUTIONS}\label{sec:extensions}
\section{EXTENSIONS}\label{sec:extensions}

To perform mean estimation on $d$-dimensional inputs with independent coordinates drawn from $\text{Ber}(\mean)$, one can simply use Algorithm \ref{alg:multcounterfull} on each coordinate. Since the release corresponding to each coordinate is user-level $\varepsilon$-DP, we get that the overall algorithm is $d\varepsilon$-DP by basic composition. 
However, if we only require an approximate differential privacy guarantee, then~\cite[Theorem III.3]{Dwork_FOCS_2010} shows that the full sequence of releases from all coordinates will be $\bigg(\varepsilon\sqrt{d\ln(1/\tilde{\delta})},\tilde{\delta}\bigg)$-DP for every $\tilde{\delta}\in(0,1]$.
Rescaling the privacy parameter to ensure $(\varepsilon,\tilde{\delta})$-DP overall gives error $\tilde{O}(1/\sqrt{t}+\sqrt{d M_{t}}/t\varepsilon)$, provided $\sum_{u=1}^{n}\min\{m_{u}(t),M_{t}/2\}\ge\tilde{O}(\sqrt{d M_{t}}/\varepsilon)$.
These arguments carry over to the case of subgaussian distributions as well.
%\himanshu{Should we remark on possible suboptimality in the beginning?}

\section{LOWER BOUND}\label{sec:lowerbound}
Consider the single-release mean estimation problem with $n$ users, each having $m$ samples, where the estimated mean must be user-level $\eps$-DP. In this setting, a lower bound of $\Omega\paren{\frac{1}{\sqrt{mn}} + \frac{1}{\sqrt{m}n\eps}}$ on achievable accuracy is known (Theorem 3 in \cite{Liu_neurips_2020}). Furthermore, in the same setting, Theorem 9 in \cite{Levy_Neurips_2021} shows that any algorithm that is user-level $\eps$-DP requires $n = \Omega\paren{\frac{1}{\eps}}$  users. To see what these lower bounds say about our proposed continual-release algorithm (Algorithm \ref{alg:multcounterfull}), let $t$ be a time instant where $N$ users have contributed $M$ samples each (thus, $t = NM$). In this case, $M_t = M$, and Theorem \ref{thm:final} gives an accuracy guarantee of $\tilde{O}\paren{\frac{1}{MN} + \frac{1}{\sqrt{M}N\eps}}$, provided $N = \tilde{\Omega}\paren{\frac{1}{\eps}}$ (diversity condition). This matches the single-release lower bounds (on both accuracy and number of users) up to log factors.
\section{DISCUSSION}\label{sec:discussion}
We have shown that Algorithm \ref{alg:multcounterfull} is almost optimal at every time instant where users have contributed equal number of samples. However, what about the settings when different users contribute different number of samples? Is it optimal, for instance, to not use excessive samples from a single user? The answer is not very clear even in the single-release setting. Investigating this is an interesting future direction.

\printbibliography

\newpage
\appendix

	\section{Useful Inequalities} \label{supp:inequalities}
	
%	We first state a composition result satisfied by DP mechanisms.
%	\begin{lemma}\label{lem:composition}
%		Let $\cM_{i}:(\mathcal{X} \times [n])^{T}\rightarrow \cY$ be user-level $\varepsilon_i$-DP mechanisms for $i\in[k]$. Then the composition $\cM:(\mathcal{X} \times [n])^{T}\rightarrow\mR^{k}$ of these mechanisms given by $\cM(x):=(\cM_{1}(x),\ldots,\cM_{k}(x))$ is user-level $(\sum_{i=1}^{k}\varepsilon_{i})$-DP.
%	\end{lemma}
	
	We state two concentration inequalities that we will use extensively.
	\begin{lemma}\label{lem:conc_ber}
		Let $x_{i}\stackrel{iid}{\sim}\text{Ber}(\mean)$ for $i\in[n]$.  Then, for every $\delta\in(0,1]$,
		\begin{align*}
			\mathbb{P}\bigg(\bigg\lvert\sum_{i=1}^{n}x_{i}-n\mean\bigg\rvert \ge \sqrt{\frac{n}{2}\ln\frac{2}{\delta}}\bigg)\le \delta.
		\end{align*}
	\end{lemma}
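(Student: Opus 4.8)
The plan is to recognize Lemma~\ref{lem:conc_ber} as Hoeffding's inequality specialized to Bernoulli (hence $[0,1]$-valued) summands, and to reconstruct it via the Chernoff--Cram\'er exponential-moment method. Writing $S_n := \sum_{i=1}^{n} x_i$ and $s := \sqrt{\frac{n}{2}\ln\frac{2}{\delta}}$, I would first bound the upper tail $\mathbb{P}(S_n - n\mean \ge s)$, then handle the lower tail by symmetry and combine the two with a union bound; this union is precisely what produces the factor $2$ inside the logarithm.

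For the upper tail, the first step is the exponential Markov (Chernoff) bound: for any $\lambda > 0$,
\[
\mathbb{P}(S_n - n\mean \ge s) \le e^{-\lambda s}\, \mathbb{E}\!\left[e^{\lambda(S_n - n\mean)}\right] = e^{-\lambda s}\prod_{i=1}^{n} \mathbb{E}\!\left[e^{\lambda(x_i - \mean)}\right],
\]
where independence of the $x_i$ lets me factor the moment generating function. The second step is Hoeffding's lemma, which controls the MGF of a centered random variable supported on an interval of length one: since $x_i - \mean \in [-\mean,\,1-\mean]$, one has $\mathbb{E}[e^{\lambda(x_i-\mean)}] \le e^{\lambda^2/8}$. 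This is the only nontrivial ingredient, and it follows from convexity of $t \mapsto e^{\lambda t}$ on the support together with a second-order Taylor bound on the cumulant generating function. Combining the two steps gives $\mathbb{P}(S_n - n\mean \ge s) \le \exp\!\big(-\lambda s + \lambda^2 n/8\big)$.

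The third step is to optimize the free parameter $\lambda$. Minimizing the exponent yields $\lambda = 4s/n$, which gives $\mathbb{P}(S_n - n\mean \ge s) \le \exp(-2s^2/n)$. Applying the identical argument to $-x_i$ bounds the lower tail by the same quantity, so a union bound yields $\mathbb{P}(|S_n - n\mean| \ge s) \le 2\exp(-2s^2/n)$. Finally, substituting $s = \sqrt{\frac{n}{2}\ln\frac{2}{\delta}}$ makes the exponent $2s^2/n = \ln\frac{2}{\delta}$, so the right-hand side equals $2\cdot\frac{\delta}{2} = \delta$, exactly the claimed bound. There is no genuine obstacle here, as this is a textbook concentration estimate; the only point demanding care is the bookkeeping of constants, namely verifying that the factor $1/8$ from Hoeffding's lemma and the factor $1/2$ inside the square root conspire to cancel the leading constant and deliver exactly $\delta$ rather than a constant multiple of it.
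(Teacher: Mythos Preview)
Your proof is correct; it is the standard derivation of Hoeffding's inequality via the Chernoff bound and Hoeffding's lemma, and the constants are tracked correctly so that the final substitution yields exactly $\delta$. The paper itself does not prove Lemma~\ref{lem:conc_ber}: it is merely stated in the appendix on useful inequalities as a well-known concentration result, so your write-up supplies a proof where the paper gives none.
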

	
	\begin{lemma}\label{lem:conc_lap}
		Let $x_{i}\sim\text{Lap}(b_{i})$, $i\in[n]$, be independent. Then, for every $\delta\in(0,1]$, 
		\begin{align*}
			\mathbb{P}\bigg(\bigg\lvert\sum_{i=1}^{n}x_{i}\bigg\rvert\ge c\sqrt{\sum_{i=1}^{n}b_{i}^{2}}\ln\frac{1}{\delta}\bigg)\le\delta,
		\end{align*}
		where $c$ is an absolute constant.
	\end{lemma}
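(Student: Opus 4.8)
The plan is to prove the equivalent statement that the centered weighted Laplace sum $S:=\sum_{i=1}^n x_i$ has a purely exponential tail, namely $\mathbb{P}(|S|\ge u)\le e^{-u/(c\sqrt{V})}$ for all $u\ge 0$, where $V:=\sum_{i=1}^n b_i^2$; substituting $\delta=e^{-u/(c\sqrt V)}$ recovers the lemma, and this reformulation makes transparent why the threshold scales linearly in $\ln(1/\delta)$. Since each $x_i\sim\text{Lap}(b_i)$ is symmetric and independent, $S$ is symmetric, so it suffices to control $\mathbb{P}(S\ge u)$ and double it. Throughout I will use the elementary fact $b_{\max}:=\max_i b_i\le\sqrt{V}$, which is what ultimately lets the threshold be expressed through $\sqrt V$ alone rather than through $b_{\max}$.

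The workhorse is a moment-generating-function (Chernoff) argument. First I would record the single-variable MGF $\mathbb{E}[e^{\lambda x_i}]=(1-\lambda^2 b_i^2)^{-1}$, valid for $|\lambda|<1/b_i$, so that by independence $\mathbb{E}[e^{\lambda S}]=\prod_{i=1}^n(1-\lambda^2 b_i^2)^{-1}$ for $|\lambda|<1/b_{\max}$. Applying the inequality $-\ln(1-y)\le 2y$ on $y\in[0,1/2]$ term by term gives the clean sub-exponential MGF bound $\mathbb{E}[e^{\lambda S}]\le\exp(2\lambda^2 V)$ on the range $|\lambda|\le 1/(\sqrt2\,b_{\max})$. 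A Chernoff bound $\mathbb{P}(S\ge u)\le\exp(-\lambda u+2\lambda^2 V)$ then splits into two regimes: when the unconstrained optimizer $\lambda=u/(4V)$ is admissible I obtain the sub-Gaussian bound $\exp(-u^2/(8V))$, and otherwise I evaluate at the boundary $\lambda=1/(\sqrt2\,b_{\max})$ to obtain the sub-exponential bound $\exp(-u/(2\sqrt2\,b_{\max}))$. Using $b_{\max}\le\sqrt V$ to weaken the second exponent and combining, $\mathbb{P}(|S|\ge u)\le 2\exp\!\big(-\tfrac12\min\{u^2/(4V),\,u/(\sqrt2\sqrt V)\}\big)$, and choosing $c$ a sufficiently large absolute constant makes the right-hand side at most $e^{-u/(c\sqrt V)}$ for all $u$ bounded below by a constant multiple of $\sqrt V$ (equivalently, all $\delta$ below some absolute $\delta_0<1$).

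The main obstacle is the complementary regime $u=O(\sqrt V)$, i.e.\ $\delta$ close to $1$. Here the Chernoff bound is genuinely useless: because the optimal $\lambda$ collapses to $0$ once $u<\mathbb{E}|S|$, it returns the vacuous value $2$, whereas the claim $\mathbb{P}(|S|\ge u)\le\delta$ is still non-trivial. The factor $2$ lost in the two-sided union bound cannot be recovered from the MGF and must instead be supplied by anti-concentration. I would resolve this using log-concavity: the Laplace density is log-concave, hence $S$---a convolution of symmetric log-concave densities---is symmetric, log-concave, and unimodal at $0$, which yields a universal lower bound $\sqrt V\,f_S(0)\ge c_0>0$ on its density at the mode. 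This density bound lower-bounds $\mathbb{P}(|S|<u)$ linearly in $u$ for small $u$, so that $\mathbb{P}(|S|\ge u)$ decays at least as fast as the target slope of $e^{-u/(c\sqrt V)}$ at the origin; continuity and monotonicity of $u\mapsto\mathbb{P}(|S|\ge u)$ then cover the bounded intermediate range $u\in(0,O(\sqrt V))$. Fixing $c$ large enough to meet both the Chernoff requirement of the previous paragraph and this anti-concentration requirement simultaneously completes the proof, and checking that a single absolute $c$ serves uniformly over all $\delta\in(0,1]$ is the one genuinely delicate point.
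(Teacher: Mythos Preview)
The paper does not actually prove this lemma; it is stated without proof in Appendix~A as a standard concentration fact and used as a black box throughout. So there is no paper argument to compare against, and your proposal should be judged on its own merits.

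Your Chernoff/MGF portion is correct and standard: it yields the Bernstein-type tail $\mathbb{P}(|S|\ge u)\le 2\exp\!\big(-c_1\min\{u^2/V,\,u/\sqrt V\}\big)$, which already gives the lemma for all $\delta$ bounded away from $1$ after absorbing constants into $c$. You are also right that the leading factor $2$ makes this vacuous for $\delta$ near $1$, and that some anti-concentration input is required there; this is a genuine subtlety that most statements of the inequality sidestep by writing $\ln(2/\delta)$ rather than $\ln(1/\delta)$.

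Your log-concavity idea for the small-$u$ regime is valid, but the ``continuity and monotonicity'' patch for the intermediate range is the weak link as written. A cleaner route, using the same ingredients you already have, avoids the case split entirely: since $f_S$ is symmetric log-concave, the one-sided survival function $u\mapsto \mathbb{P}(S\ge u)$ is log-concave, hence so is $g(u):=\mathbb{P}(|S|\ge u)=2\,\mathbb{P}(S\ge u)$. Then $\log g$ is concave with $\log g(0)=0$ and $(\log g)'(0)=-2f_S(0)$, so concavity gives $\log g(u)\le -2f_S(0)\,u$ for \emph{all} $u\ge 0$. Combined with the density bound $f_S(0)\ge c_0/\sqrt V$ (a known fact for log-concave densities of given variance, which you correctly invoke), this yields $\mathbb{P}(|S|\ge u)\le e^{-2c_0 u/\sqrt V}$ directly, with $c=1/(2c_0)$. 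This single inequality is the lemma in full; the separate Chernoff analysis then becomes optional rather than necessary.
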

	%%%%%%%%%%%%%%%%%%%%%%%%%%%%%%%%%%%%%%%%%%%%%%%%%%%%%%%%
	
%	\section{Continual mean estimation: Naive use of binary mechanism}
%	
%	Algorithm~\ref{alg:naive} gives a naive way of using binary mechanism for continual mean estimation. 
	
%	%%%%%%Algorithm 5
%	\setcounter{algorithm}{4}
%	\begin{algorithm}[ht]
%		\caption{Continual mean estimation (naive)}
%		\label{alg:naive}
%		\begin{algorithmic}[1]
%			\Require $\big((x_t,u_t)\big)_{t\in[T]}$ (stream), $T$ (stream length), $m$ (max no. of samples per user), $\eps$ (privacy parameter), $\delta$ (failure probability).
%			
%			\State {\bf Initialize} binary mechanism $\binmech$ with noise level $\eta = \frac{m (1+\log T)}{\eps}$. 
%			
%			\For{$t = 1,2,\ldots,T$}
%			\State $\binmechstream \gets \binmechstream \cup \set{x_t}$
%			\State $S_t \gets \binmechsum$
%			\State {\bf return} $\est_t = \frac{S_t}{t}$
%			\EndFor
%		\end{algorithmic}
%	\end{algorithm}
%	%%%%%%%%
	
	%%%%%%%%%%%%%%%%%%%%%%%%%%%%%%%%%%%%%%%%%%%%%%%%%%%%%%%
	
	\section{Continual mean estimation: Wishful scenario} \label{supp:wishful}
	
	Algorithm~\ref{alg:wishful} is the algorithm under the wishful scenario where:
	\begin{itemize}
		\item we already have a prior estimate $\prior$ that satisfies $\abs{\prior-\mean} \leq \frac{1}{\sqrt{m}}$;
		\item every user contributes their $m$ samples in contiguous time steps; that is, user $1$ contributes samples for the first $m$ time steps, followed by user 2 who contributes samples for the next $m$ time steps, and so on.	
	\end{itemize}
	
%	%%%%%%Algorithm 6
%	\begin{algorithm}[ht]
%		\caption{Continual mean estimation (wishful scenario)}
%		\label{alg:wishful}
%		\begin{algorithmic}[1]
%			\Require $\big((x_t,u_t)\big)_{t\in[T]}$ (stream \textit{with $m$ samples from any user arriving contiguously}), $n$ (no. of users), $m$ (no. of samples per user), $\eps$ (privacy), $\prior$ (estimate of $\mean$ s.t. $\abs{\prior-\mean} \leq 1/\sqrt{m}$), $\delta$ (failure probability).
%			
%			\Statex Let $\Pi(\cdot)$ be the projection on the interval $\cI$, where
%			\begin{equation} \label{eq:wishfulprojint}
%				\cI = 
%				\biggl[
%				m\prior - \Delta, m\prior + \Delta 
%				\biggr] \text{ where } \Delta = \sqrt{\frac{m}{2} \ln \frac{2n}{\delta}} + {\sqrt{m}}.
%			\end{equation}
%			
%			\State {\bf Initialize} binary mechanism $\binmech$ with noise level $\eta$ where
%			\begin{equation} \label{eq:wishfulnoise}
%				\eta= \frac{2\Delta(1 +\log n)}{\varepsilon}.
%			\end{equation}
%			
%			\State {\bf Initialize} ${\rm total} \gets 0$.
%			\For{$t < m$}
%			\State {\bf return} $\est_t = \prior$
%			\EndFor
%			\For{$t \geq m$}
%			\If{$t \in \set{m,2m,3m,\ldots,nm}$}
%			\State ${\rm total} \gets {\rm total} + m$
%			\State $\sigma = \Pi\paren{\sum_{j=t-m+1}^{t}x_j}$
%			\State $\binmechstream \gets \binmechstream \cup \set{\sigma}$
%			\EndIf
%			\State $S_t \gets \binmechsum$
%			\State {\bf return} $\est_t = \frac{S_t}{\rm total}$
%			\EndFor
%		\end{algorithmic}
%	\end{algorithm}
%	%%%%%%%%
	
	\subsection{Guarantees for Algorithm \ref{alg:wishful}}
	Let $x^{(u)}_j$ denote the $j$-th sample contributed by user $u$.
	
	\paragraph{Privacy.} A user $u$ contributes at most one element {to} $\binmechstream$; this element is $\Pi\paren{\sum_{j=1}^{m}x_j^{(u)}}$, where $\Pi(\cdot)$ is the projection on the interval $\cI$ defined in \eqref{eq:wishfulprojint}.
	So, there are at most $n$ elements in $\binmechstream$ throughout the course of the algorithm.
	From the way binary mechanism works, a given element in $\binmechstream$ will be used at most $(1+\log n)$ times while computing terms in $\binmechpartial$ (see Section~2.3 in the main paper). 
	Thus, changing a user can change the $\ell_1$-norm of the array $\binmechpartial$ by at most $(1+\log n) (2\Delta)$, where $\Delta$ is as in \eqref{eq:wishfulprojint}.
	Hence, adding independent ${\rm Lap}(\eta)$ noise (with $\eta$ as in \eqref{eq:wishfulnoise}) while computing each term in $\binmechpartial$ is sufficient to ensure that the array $\binmechpartial$ remains user-level $\eps$-DP throughout the course of the algorithm.
	Since the output $\paren{\est_{t}}_{t=1}^T$ is computed using  $\binmechsum$, which, in turn is a function of the array $\binmechpartial$, we conclude that Algorithm~\ref{alg:wishful} is user-level $\eps$-DP.
	
	%Each user can affect at most $1+\log n$ entries in ${\rm NoisyPartialSums}$. Further, the set of samples from a user can change an entry in ${\rm NoisyPartialSums}$ by at most $|\cI|=2\Delta$. Adding Laplace noise with parameter $2\Delta(1+\log n)/\varepsilon$ to each entry in ${\rm NoisyPartialSums}$ thus ensures that the overall sequence of releases is user-level $\varepsilon$-DP.
	
	\paragraph{Utility.} We first show that with high probability, the projection operator $\Pi(\cdot)$ plays no role throughout the algorithm. We then show utility guarantee for $t = km$ assuming no truncation, before generalizing the guarantee to arbitrary $t$.
	\medskip
	
	\underline{\it No truncation happens}:	
	\smallskip
		
	For a user $u$, we have from Lemma~\ref{lem:conc_ber} that
	
	\[
	\Pr\paren{ \abs{\sum_{j=1}^{m}x_j^{(u)} - m\mean} \leq \sqrt{\frac{m}{2} \ln \frac{2n}{\delta}} } \geq 1 - \frac{\delta}{n}.
	\]
	Since $\abs{\prior-\mean} \leq \frac{1}{\sqrt{m}}$, this gives us
	\[
	\forall u \in [n], \ \Pr\paren{ \abs{\sum_{j=1}^{m}x_j^{(u)} - m\prior} \leq \sqrt{\frac{m}{2} \ln \frac{2n}{\delta}} + \sqrt{m}} \geq 1 - \frac{\delta}{n}.
	\]
	Thus, by union bound
	\[
	\Pr\paren{ \forall u \in [n], \abs{\sum_{j=1}^{m}x_j^{(u)} - m\prior} \leq \sqrt{\frac{m}{2} \ln \frac{2n}{\delta}} + \sqrt{m}} \geq 1 - \delta.
	\]
	This means, that with probability at least $1-\delta$, we have $\Pi\paren{\sum_{j=1}^{m}x_j^{(u)}} = \sum_{j=1}^{m}x_j^{(u)}$ for every $u \in [n]$. That is, with probability at least $1-\delta$, no truncation happens throughout the course of the algorithm.
	\medskip
	
	%Let $y_{i}=\sum_{t=(i-1)m+1}^{im}x_{t}$ denote the sum of user $i$'s samples. We can see that, with large probability, $\Pi(y_{i})=y_{i}$ for all $i\in[n]$, since
	%\begin{align*}
	%	\P\bigg(\bigcup_{i=1}^{n}\{\Pi(y_{i})\ne y_{i}\}\bigg)
	%	&\le \sum_{i=1}^{n}\P\bigg(\bigg\lvert y_{i}-m\prior\bigg\rvert\ge\sqrt{m\ln\frac{4n}{\delta}}+\sqrt{m}\bigg)\\
	%	&\le \sum_{i=1}^{n}\P\bigg(\bigg\lvert y_{i}-m\mean\bigg\rvert\ge\sqrt{m\ln\frac{4n}{\delta}}\bigg)+\sum_{i=1}^{n}\P(|m\mean-m\prior|\ge\sqrt{m})\\
	%	&	\le \delta
	%\end{align*}
	%%%%%%%%%%%%%%%%%%%%%%%%%%%%
	%In the last step above, we used Lemma \ref{lem:conc_ber} and the fact that $|\prior-\mean|\le1/\sqrt{m}$ with probability at least $1-\delta/2n$.
	
	\underline{\it Guarantee at $t=km$ ignoring projection $\Pi(\cdot)$}:	
	\smallskip
	
	For now, consider Algorithm~\ref{alg:wishful} without the projection operator $\Pi(\cdot)$ in Line 9. Call it Algorithm~\ref{alg:wishful}-NP (NP $\equiv$ `No Projection').
	For Algorithm~\ref{alg:wishful}-NP, the following happens at $t = km$, for integer $1\leq k\leq n$: $\binmechstream$ has elements
	Let $\sigma_1 = \paren{\sum_{j=1}^{m}x_j^{(1)}}, \sigma_2 =  \paren{\sum_{j=1}^{m}x_j^{(2)}}, \ldots, \sigma_k =  \paren{\sum_{j=1}^{m}x_j^{(k)}}$. Thus, $\binmechpartial$ also contains $k$ terms, where each term has independent $\text{Lap}(\eta)$ added to it. Hence, $\binmechsum$ would be computed using at most $1+\log k$ terms from $\binmechpartial$, and so, using Lemma~\ref{lem:conc_lap}, we have
	\[
	\Pr\paren{\abs{S_{km} - \sum_{i=1}^{k} \sigma_i} \leq c\eta\sqrt{1+\log k} \ln \frac{1}{\delta}} \geq 1-\delta.
	\]
	Furthermore, using Lemma~\ref{lem:conc_ber}, we have for $t = km$ that
	\[
	\Pr\paren{ \abs{\sum_{i=1}^{k} \sigma_i - \mean km} \leq \sqrt{\frac{km}{2} \ln \frac{2}{\delta}} } \geq 1-\delta.
	\]
	Thus, we have the following at $t=km$:
	\[
	\Pr\paren{\abs{S_{km} - \mean km} \leq \sqrt{\frac{km}{2} \ln \frac{2}{\delta}} + c\eta\sqrt{1+\log k} \ln \frac{1}{\delta}} \geq 1-2\delta
	\]
	or, dividing by $km$, we have
	\begin{equation} \label{eq:wish1}
		\Pr\paren{\abs{\est_{km} - \mean} \leq \sqrt{\frac{1}{2km} \ln \frac{2}{\delta}} + c\frac{\eta}{km}\sqrt{1+\log k} \ln \frac{1}{\delta}} \geq 1-2\delta.
	\end{equation}
	
	\underline{\it Guarantee at arbitrary $t\geq m$ ignoring projection $\Pi$}: 	
	\smallskip
	
	We now give utility guarantee of Algorithm~\ref{alg:wishful}-NP, for arbitrary $t \geq m$. Note that for any $t \in [km, (k+1)m-1)$, $k \geq 1$, we output $\est_{t} = \est_{km}$. Moreover, we always have that $km \geq \frac{t}{2}$. Thus, for $t \in [km, (k+1)m-1)$, we have, with probability $\geq 1-2\delta$
	
	\begin{align} \label{eq:wish2}
		\abs{\est_{t} - \mean} 
		&= \abs{\est_{km} - \mean} \nonumber\\
		&\leq \sqrt{\frac{1}{2km} \ln \frac{2}{\delta}} + c\frac{\eta}{km}\sqrt{1+\log k} \ln \frac{1}{\delta} \tag{from \eqref{eq:wish1}} \nonumber\\
		&\leq \sqrt{\frac{1}{t} \ln \frac{2}{\delta}} + c\frac{2\eta}{t}\sqrt{1+\log k} \ln \frac{1}{\delta}. \tag{since $km \geq \frac{t}{2}$}
	\end{align}

	\underline{\it Final utility guarantee for Algorithm~\ref{alg:wishful} at arbitrary $t$}: 	
	\smallskip
	
	The above utility guarantee was obtained for Algorithm~\ref{alg:wishful}-NP, which is a variant of Algorithm~\ref{alg:wishful} without projection operator $\Pi(\cdot)$ in Line 9. Now, let $\cE_1$ be the event that no truncation happens. We already saw that no truncation happens (i.e. projection operator plays no role) with probability at least $1-\delta$. That is, $\Pr(\cE_1) \geq 1-\delta$. Observe that
	\begin{align*}
	\cE_1 
	&:= \set{\text{no truncation happens}} \\
	&= \set{\text{Algorithm~\ref{alg:wishful} and Algorithm~\ref{alg:wishful}-NP become equivalent}}.
	\end{align*}
	Let
	\[
	\cE_2 
	= \set{\abs{\est_{t} - \mean} \leq \sqrt{\frac{1}{t} \ln \frac{2}{\delta}} + c\frac{2\eta}{t}\sqrt{1+\log k} \ln \frac{1}{\delta} \text{ for Algorithm~\ref{alg:wishful}-NP}}
	\]
	We saw above that $\Pr(\cE_2) \geq 1-2\delta$. Thus, using union bound that $\Pr(\cE_1 \cap \cE_2) \geq 1-3\delta$. That is, for Algorithm~\ref{alg:wishful}, we have that for $t \geq m$, (we upper bound $\log k$ by $\log n$)
	\[
	\Pr\paren{ \abs{\est_{t} - \mean} \leq \sqrt{\frac{1}{t} \ln \frac{2}{\delta}} + c\frac{2\eta}{t}\sqrt{1+\log n} \ln \frac{1}{\delta} } \geq 1-3\delta.
	\]
	Substituting value of $\eta$ from \eqref{eq:wishfulnoise}, we get that for $t \geq m$, with probability at least $1-3\delta$, we have
	\begin{align*}
		\abs{\est_{t} - \mean} 
		&\leq \sqrt{\frac{1}{t} \ln \frac{2}{\delta}} + \frac{\sqrt{m}}{t\eps}\paren{ 1+\sqrt{\frac{1}{2} \ln \frac{2n}{\delta}} } {4c(1+\log n)^{3/2} }\ln \frac{1}{\delta} \\
		&= \tilde{O}\paren{\frac{1}{\sqrt{t}} + \frac{\sqrt{m}}{t\eps}}.
	\end{align*}
	This guarantee holds trivially for $t < m$ as well because the algorithm outputs the prior estimate $\prior$ (Lines 3-5), which gives us that $\abs{\est_t - \mean} = \abs{\prior-\mean} \leq \frac{1}{\sqrt{m}} \leq \frac{1}{\sqrt{t}}$.

	%In particular, applying Lemma \ref{lem:conc_lap} to a sum of $(1+\log n)$ i.i.d. $\text{Lap}(\eta)$ random variables with
	%\begin{align*}
	%	\eta = \frac{2(1+\log n)}{\varepsilon}\bigg(\sqrt{m\ln\frac{4n}{\delta}}+\sqrt{m}\bigg),
	%\end{align*} 
	%the error 
	%\begin{align*}
	%	|\est_{t} - \mean| = O\bigg(\sqrt{\frac{1}{t}\ln\frac{1}{\delta}}+\frac{2\log n}{t\varepsilon}\bigg(\sqrt{m\ln\frac{n}{\delta}}+\sqrt{m}\bigg)\sqrt{\log t}\bigg)
	%\end{align*}
	%for $t\ge m$, with probability at least $1-\delta$.
	
	%%%%%%%%%%%%%%%%%%%%%%%%%%%%%%%%%%%%%%%%%%%%%%%%%%%%%%%
	\medskip
	{\it Remark:} Throughout this section, we assumed that we were given a prior estimate $\prior$ satisfying $\abs{\prior-\mean} \leq \frac{1}{\sqrt{m}}$. Our discussion here also applies to the case even when we have a prior that satisfies $\abs{\prior-\mean} \leq \frac{1}{\sqrt{m}}$ with probability at least $1-\delta$ (instead of being deterministic). Also, if $\prior$ is computed using samples from users, it  should be user-level DP. In particular, if $\prior$ is user-level $\eps$-DP, the overall algorithm becomes user-level $2\eps$-DP (using composition property of DP from Lemma~\ref{lem:composition}).
	
	%%%%%%%%%%%%%%%%%%%%%%%%%%%%%%%%%%%%%%%%%%%%%%%%%%
	\newpage
	\section{Continual mean estimation assuming prior estimate: Single binary mechanism} \label{supp:singcounter}

	Before proving guarantees for Algorithm~\ref{alg:singcounter} (Theorem~3.1), we prove a claim about exponential withhold-release pattern with arbitrary user ordering.
	
	\subsection{Exponential withhold-release}
	Recall the exponential withhold-release pattern. For a given user $u$, we release the first two samples $x_1^{(u)}, x_2^{(u)}$; then, we withhold $x_3^{(u)}$ and release truncated version of $(x_3^{(u)} + x_4^{(u)})$ when we receive $x_4^{(u)}$; we then withhold $x_5^{(u)},x_6^{(u)},x_7^{(u)}$ and release truncated version of $(x_5^{(u)}+x_6^{(u)}+x_7^{(u)}+x_8^{(u)})$ when we receive $x_8^{(u)}$; and so on. In general, we withhold samples $x_{2^{\ell-1}+1}^{(u)}, \ldots, x_{2^{\ell}-1}^{(u)}$ and release truncated version of $\paren{\sum_{i=2^{\ell-1}+1}^{2^{\ell}}x_i^{(u)}}$ when we receive the $2^\ell$-th sample $x_{2^{\ell}}^{(u)}$ from user $u$.
	
	We ignore truncations for now. For a user $u$, let $\sigma_0^{(u)} = x_1^{(u)},$ $\sigma_1^{(u)} = x_2^{(u)},$ $\sigma_2^{(u)} = (x_3^{(u)} + x_4^{(u)}),\ldots,$ $\sigma_{\ell}^{(u)}~=~ \paren{\sum_{i=2^{\ell-1}+1}^{2^{\ell}}x_i^{(u)}}, \ldots$. Let $\big((x_t,u_t)\big)_{t\in[T]}$ be a stream with arbitrary user order. We follow the exponential withhold-release protocol and feed $\sigma_\ell^{(u)}$'s to an array named ${\rm Stream}$. For instance, suppose we receive $(x_1,1), (x_2,2), (x_3,2), (x_4,2), (x_5,1),(x_6,2), (x_7,1), (x_8,1)$, where the second index denotes the user identity. Then, the input feed to ${\rm Stream}$ looks as follows:
	\begin{equation} \label{eq:wrexample}
		\begin{matrix}
			t=1: & x_1 & (\text{1st sample from user }1)\\
			t=2: & x_2 & (\text{1st sample from user }2)\\
			t=3: & x_3 & (\text{2nd sample from user }2)\\
			t=4: &{\tt withhold} & (\text{3rd sample from user }2)\\
			t=5: & x_5 & (\text{2nd sample from user }1)\\
			t=6: & x_4+x_6 & (\text{4th sample from user }2)\\
			t=7: &{\tt withhold} & (\text{3rd sample from user }1)\\
			t=8: &x_7+x_8 & (\text{4th sample from user }1)
		\end{matrix}
	\end{equation}
	Since we are only interested in computing sums, feeding $\sigma_{\ell}^{(u)} = \paren{\sum_{i=2^{\ell-1}+1}^{2^{\ell}}x_i^{(u)}}$ to ${\rm Stream}$ is equivalent to feeding {\it information about $2^{\ell}-1$ samples} to ${\rm Stream}$. We now claim the following.
	
	\begin{claim} \label{claim:wr}
		Let $\big((x_t,u_t)\big)_{t\in[T]}$ have arbitrary user ordering. Suppose we follow the exponential withhold-release protocol and feed $\sigma_\ell^{(u)}$'s to an array named ${\rm Stream}$. Then, at any time $t$, ${\rm Stream}$ contains information about at least $t/2$ samples.
	\end{claim}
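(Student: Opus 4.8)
The plan is to track, separately for each user, how many of that user's samples are ``covered'' by the blocks that have been fed into ${\rm Stream}$ up to time $t$, and then sum over users. By ``${\rm Stream}$ contains information about $s$ samples'' we mean precisely: the entries of ${\rm Stream}$ at time $t$ are (truncated) sums of pairwise disjoint blocks of consecutive samples belonging to individual users, and $s$ is the total number of samples lying in these blocks. Truncation only changes the value of an entry, not the block it summarizes, so we may ignore it throughout, exactly as in the statement.

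First I would analyze a single user. Fix $u$ and suppose $k := m_u(t) \geq 1$ samples of user $u$ have arrived by time $t$. By the exponential withhold--release rule, the entries contributed to ${\rm Stream}$ on behalf of user $u$ are $\sigma_0^{(u)} = x_1^{(u)}$, $\sigma_1^{(u)} = x_2^{(u)}$, and $\sigma_\ell^{(u)} = \sum_{i=2^{\ell-1}+1}^{2^\ell} x_i^{(u)}$ for each $\ell\ge 2$ whose triggering sample $x_{2^\ell}^{(u)}$ has already arrived, i.e.\ for each $\ell$ with $2^\ell \le k$. Thus, with $\ell^\ast := \lfloor \log m_u(t) \rfloor$ (logarithm base $2$, as elsewhere in the paper), the entries present are exactly $\sigma_0^{(u)},\ldots,\sigma_{\ell^\ast}^{(u)}$, and together they form a disjoint union of blocks covering precisely the samples $x_1^{(u)},\ldots,x_{2^{\ell^\ast}}^{(u)}$, i.e.\ $2^{\ell^\ast}$ samples in total. (If $m_u(t)=0$, user $u$ covers $0$ samples, consistent with the convention $2^{\lfloor \log 0\rfloor}:=0$.)

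Next I would invoke the elementary inequality $2^{\lfloor \log k \rfloor} \ge k/2$, valid for every integer $k \ge 1$: since $2^{\lfloor \log k\rfloor + 1} > k$, dividing by $2$ gives $2^{\lfloor \log k\rfloor} > k/2$. Hence the number of samples of user $u$ covered by ${\rm Stream}$ at time $t$ is at least $m_u(t)/2$. Finally, since blocks from distinct users involve disjoint sample sets, the total number of samples covered is $\sum_{u\in[n]} 2^{\lfloor \log m_u(t)\rfloor} \ge \tfrac12\sum_{u\in[n]} m_u(t) = t/2$, using that exactly one sample arrives per time step so $\sum_{u} m_u(t) = t$. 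This is oblivious to the arrival order, which is the whole point.

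The only real obstacle is the bookkeeping in the first two steps: pinning down the right notion of ``information about $s$ samples'' (a per-user disjoint union of consecutive-sample blocks) and verifying that the withhold--release rule makes the covered prefix of each user's samples have length exactly the power of two $2^{\lfloor \log m_u(t)\rfloor}$ at \emph{every} time $t$, not just at the release times. Once this is established, the bound $2^{\lfloor \log k\rfloor}\ge k/2$ and the summation over users are immediate, and the same reasoning transfers without change to the multiple-binary-mechanism variant (Algorithms~\ref{alg:multcounter} and~\ref{alg:multcounterfull}), where the blocks of size $2^{\ell-1}$ get routed to $\binmech[\ell]$ but the covered-sample count per user is unchanged.
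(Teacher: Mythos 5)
Your proof is correct and follows essentially the same route as the paper's: decompose per user, show that at any time the number of a user's samples already incorporated into ${\rm Stream}$ is at least half the number that have arrived, and sum over users using disjointness of the blocks. The only difference is cosmetic — you make the per-user count explicit as $2^{\lfloor \log m_u(t)\rfloor}$ and invoke $2^{\lfloor \log k\rfloor} > k/2$, whereas the paper argues qualitatively that ``withheld $\le$ released'' along the sequence $(R,R,W,R,3W,R,7W,R,\dots)$; these are the same fact.
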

	\begin{proof}
		Let `R' denote `release' and `W' denote `withhold'. Suppose, for now, only user $u$ arrives at all time steps. Then, the withhold-release sequence looks like $(R_u,R_u,W_u,R_u,3W_u,R_u,7W_u,R_u,\cdots)$. Here, `$kW_u$' denotes `$W_u$' occurs for the next $k$ steps in the sequence. Moreover, at every `$R_u$', information about the samples withheld after previous `$R_u$' are released. Note that, at any point along this withhold-release sequence, the number of samples withheld is less than or equal to the number of samples whose information has been released. This is for a given user $u$.
		\medskip
		
		Now, consider a withhold-release sequence induced by an arbitrary user ordering. E.g., if the user order is $1,2,2,2,1,2,1,1$, then the corresponding withhold-release sequence is $(R_1,R_2,R_2,W_2,R_1,R_2,W_1,R_1)$ (see \eqref{eq:wrexample}). Here, subscript denotes user ID. Now, at any time $t$ in this withhold-release sequence, if we just consider $R_u$'s and $W_u$'s up to time $t$ for a fixed user $u$ , we will have that (argued above) the number of samples withheld for user $u$ is less than or equal to the number of samples from user $u$ whose information has been released. This holds for every user who has occurred till time $t$. Thus, at any time $t$, total number of samples withheld {\it (across all users)} is less than or equal to total number of samples {\it (across all users)} whose information has been released. Hence, ${\rm Stream}$ contains information about at least $t/2$ samples.
	\end{proof}
	%%%%%%%%%%5
	
	\subsection{Proof of Theorem~3.1} \label{subsec:3.1} \label{supp:singcounterthm}
	We will prove the following theorem.
	
	\begin{theorem*}
		Assume that we are given a user-level $\eps$-DP prior estimate $\prior$ of the true mean $\mean$, such that $\abs{\prior-\mean } \leq \frac{1}{\sqrt{m}}$. Then, Algorithm \ref{alg:singcounter} is $2\eps$-DP. Moreover, for any given $t \in [T]$, we have with probability at least $1-\delta$ that
		\begin{align*}
			\abs{\est_t - \mean} 
			%		&= O\paren{ \sqrt{\frac{1}{t} \ln\frac{1}{\delta}} + \eta(m,n,\delta)\sqrt{\log t} } \\
			%		&= O\paren{ \frac{1}{\sqrt{t}} + \frac{\sqrt{m}}{t\eps} \sqrt{\log t} \log m \log(n\log m) \sqrt{\ln\frac{n\log m}{\delta}} } \\
			= \tilde{O}\paren{ \frac{1}{\sqrt{t}} + \frac{\sqrt{m}}{t\eps} }.
		\end{align*}
	\end{theorem*}
	\begin{proof}
		We fist prove the privacy guarantee and then prove the utility guarantee.
		
		\paragraph{Privacy.} The prior estimate $\prior$ is given to be user-level $\eps$-DP. We will now show that the array \\$\binmechpartial$ is user-level $\eps$-DP throughout the course of the algorithm. Note that the output estimates $\est_t$ are computed using $\binmechsum$, which is a function of $\binmechpartial$. Thus, if $\prior$ and $\binmechpartial$ are both user-level $\eps$-DP, by composition property (Lemma~\ref{lem:composition}), the overall output $\paren{\est_t}_{t=1}^T$ will be user-level $2\eps$-DP.
		\medskip
		
		\underline{\it Proof that $\binmechpartial$ is user-level $\eps$-DP:}  	
		\smallskip
		
		Since a user $u$ contributes at most $m$ samples, and does so in an exponential withhold-release pattern, there are at most $(1+\log m)$ elements in $\binmechstream$ corresponding to user $u$. Since there are at most $n$ users, there are at most $n(1+\log m)$ elements added to $\binmechstream$ throughout the course of the algorithm.
		\medskip
		
		Now, since there can be at most $n(1+\log m)$ elements in $\binmechstream$, a given element in $\binmechstream$ will be used at most $\log(1+n(1+\log m))$ times while computing $\binmechpartial$. Thus, changing a user can change the $\ell_1$-norm of $\binmechpartial$ by at most $(1+\log m)(\log(1+n(1+\log m))) \Delta$, where $\Delta$ is the maximum sensitivity of an element contributed by a user to $\binmechstream$. As can be seen from \eqref{eq:singinterval}, we have $\Delta_\ell \leq \paren{\sqrt{\frac{m}{2} \ln \frac{2n\log m}{\delta}} + \sqrt{m}}$ for every $\ell$. Thus, $\Delta =  2\paren{\sqrt{\frac{m}{2} \ln \frac{2n\log m}{\delta}} + \sqrt{m}}$ is an upper bound on worst-case sensitivity of an element contributed by a user to $\binmechstream$.
		Hence, adding independent ${\rm Lap}(\eta(m,n,\delta))$ noise (with $\eta(m,n,\delta)$ as in \eqref{eq:singcounternoise}) while computing each term in $\binmechpartial$ is sufficient to ensure that $\binmechpartial$ is user-level $\eps$-DP.
		
		\paragraph{Utility.} We first show that with high probability, the projection operators $\Pi_\ell(\cdot)$ play no role throughout the course of the algorithm.
		\medskip
		
		\underline{\it No truncation happens}:  	
		\smallskip
		
		For a user $u$, for any $\ell \geq 1$, we have from Lemma~\ref{lem:conc_ber} that
		\[
		\Pr\paren{ \abs{\paren{\sum_{j=2^{\ell-1}+1}^{2^\ell}x_j^{(u)}} - (2^{\ell-1})\mean} \leq \sqrt{\frac{2^{\ell-1}}{2} \ln \frac{2n \log m}{\delta}} } \geq 1 - \frac{\delta}{n\log m}.
		\]
		Since $\abs{\prior-\mean} \leq \frac{1}{\sqrt{m}}$, this gives us that, for any user $u$, for any $\ell \geq 1$,
		\[
		\Pr\paren{ \abs{\paren{\sum_{j=2^{\ell-1}+1}^{2^\ell}x_j^{(u)}} - (2^{\ell-1})\prior} \leq \sqrt{\frac{2^{\ell-1}}{2} \ln \frac{2n \log m}{\delta}} + \frac{2^{\ell-1}}{\sqrt{m}}} \geq 1 - \frac{\delta}{n}.
		\]
		Note that a user contributes at most $m$ samples. Thus, at most $\log m$ projection operators $\Pi_\ell(\cdot)$ are applied per user. Applying union bound (over $\ell$), we have that, for any user $u$
		\[
		\Pr\paren{ \forall \ell \in \set{1,\ldots,\lfloor \log m \rfloor}, \abs{\paren{\sum_{j=2^{\ell-1}+1}^{2^\ell}x_j^{(u)}} - (2^{\ell-1})\prior} \leq \sqrt{\frac{2^{\ell-1}}{2} \ln \frac{2n \log m}{\delta}} + \frac{2^{\ell-1}}{\sqrt{m}} } \geq 1 - \frac{\delta}{n}.
		\]
		Now, we take union bound over $n$ users, which gives us that
		\[
		\Pr\paren{ \forall u \in [n], \forall \ell \in \set{1,\ldots,\lfloor \log m \rfloor}, \abs{\paren{\sum_{j=2^{\ell-1}+1}^{2^\ell}x_j^{(u)}} - (2^{\ell-1})\prior} \leq \sqrt{\frac{2^{\ell-1}}{2} \ln \frac{2n \log m}{\delta}} + \frac{2^{\ell-1}}{\sqrt{m}}} \geq 1 - \delta.
		\]
		Note that the projection operator $\Pi_{\ell}(\cdot)$ was defined as projection on interval $\cI_\ell$ as in \eqref{eq:singinterval}. The above equation shows that with probability at least $1-\delta$, the projection operators do not play any role throughout the algorithm, and thus no truncation happens.
		\medskip
		
		\underline{\it Utility at time $t$ ignoring projections $\Pi_{\ell}$}:  	
		\smallskip
		
		For now, consider Algorithm~\ref{alg:singcounter} without the projection operator $\Pi_{\ell}(\cdot)$ in Line 11. Call it Algorithm~\ref{alg:singcounter}-NP (NP $\equiv$ `No Projection'). For Algorithm~\ref{alg:singcounter}-NP, at any time $t$, $\binmechstream$ has terms of the form $\sigma_{\ell}^{(u)} = \paren{\sum_{i=2^{\ell-1}+1}^{2^{\ell}}x_i^{(u)}}$ (note that $\sigma_{\ell}^{(u)}$ ``contains information'' about $2^{\ell-1}$ samples from user $u$). From Claim~\ref{claim:wr}, we have that, at time $t$, $\binmechstream$ contains information about at least $t/2$ samples. Thus, $S_t$ would be sum of at least $t/2$ samples with Laplace noises added to it. 
		\medskip
		
		Note that, at any time $t$, at most $t$ elements are present in $\binmechstream$. This also means that at most $t$ elements are present in $\binmechpartial$. Thus, computing $S_t$ (using $\binmechsum$) would involve at most $(1+\log t)$ terms from $\binmechpartial$. Each term in $\binmechpartial$ has independent ${\rm Lap}(\eta(m,n,\delta))$ noise added to it, where $\eta(m,n,\delta)$ is as in \eqref{eq:singcounternoise}. 
		\medskip
		
		Thus, at time $t$, for Algorithm~\ref{alg:singcounter}-NP,
		\[
		S_t = \brac{\text{sum of at least $t/2$ Bernoulli samples}} + \brac{\text{sum of at most $(1+\log t)$ i.i.d. ${\rm Lap}(\eta(m,n,\delta))$ terms}}
		\]
		Hence, using Lemma~\ref{lem:conc_ber} and Lemma~\ref{lem:conc_lap}, we get that at time $t$,
		\[
		\Pr\paren{ \abs{\est_t - \mean} \leq \sqrt{\frac{1}{t} \ln\frac{2}{\delta}} + c\eta(m,n,\delta)\sqrt{1+\log t}\ln\frac{1}{\delta} } \geq 1 - 2\delta
		\]
		
		\underline{\it Final utility guarantee for Algorithm~\ref{alg:singcounter} at time $t$}:  	
		\smallskip
		
		The above utility guarantee was obtained for Algorithm~\ref{alg:singcounter}-NP, which is a variant of Algorithm~\ref{alg:singcounter} without projection operator $\Pi_\ell(\cdot)$ in Line 11. Now, let $\cE_1$ be the event that no truncation happens. We already saw that no truncation happens (i.e. projection operator plays no role) with probability at least $1-\delta$. That is, $\Pr(\cE_1) \geq 1-\delta$. Observe that
		\begin{align*}
			\cE_1 
			&:= \set{\text{no truncation happens}} \\
			&= \set{\text{Algorithm~\ref{alg:singcounter} and Algorithm~\ref{alg:singcounter}-NP become equivalent}}.
		\end{align*}
		Let
		\[
		\cE_2 
		= \set{\abs{\est_t - \mean} \leq \sqrt{\frac{1}{t} \ln\frac{2}{\delta}} + c\eta(m,n,\delta)\sqrt{1+\log t}\ln\frac{1}{\delta} \text{ for Algorithm~\ref{alg:singcounter}-NP}}
		\]
		We saw above that $\Pr(\cE_2) \geq 1-2\delta$. Thus, using union bound that $\Pr(\cE_1 \cap \cE_2) \geq 1-3\delta$. That is, for Algorithm~\ref{alg:singcounter}, we have, with probability at least $1-3\delta$,
		\begin{align*}
			\abs{\est_t - \mean} 
			&\leq \sqrt{\frac{1}{t} \ln\frac{2}{\delta}} + c\frac{\eta(m,n,\delta)}{t}\sqrt{1+\log t}\ln\frac{1}{\delta} \\
			&= O\paren{ \sqrt{\frac{1}{t} \ln\frac{1}{\delta}} + \frac{\eta(m,n,\delta)}{t}\sqrt{\log t} \ln\frac{1}{\delta} } \\
			&= O\paren{ \sqrt{\frac{1}{t} \ln\frac{1}{\delta}} + \frac{\sqrt{m}}{t\eps} \sqrt{\log t} \log m \log(n\log m) \sqrt{\ln\frac{n\log m}{\delta}} \ln\frac{1}{\delta} } \tag{using $\eta(m,n,\delta)$ from \eqref{eq:singcounternoise}} \\
			&= \tilde{O}\paren{ \frac{1}{\sqrt{t}} + \frac{\sqrt{m}}{t\eps} }.
		\end{align*}
	\end{proof}
	
	%%%%%%%%%%%%%%%%%%%%%%%%%%%%%%%%%%%%%%%%%%%%%%%%%%%%%%%
	
	\newpage
	\section{Continual mean estimation assuming prior: Multiple binary mechanisms} \label{supp:multcounter}

	\subsection{Proof of Theorem~3.2} \label{subsec:3.2}
	We will prove the following theorem.
	\begin{theorem*}
		Assume that we are given a user-level $\eps$-DP prior estimate $\prior$ of the true mean $\mean$, such that $\abs{\prior-\mean} \leq \frac{1}{\sqrt{m}}$.  Then, Algorithm \ref{alg:multcounter} is $2\eps$-DP. Moreover, for any given $t \in [T]$, we have with probability at least $1-\delta$ that
		\begin{align*}
			\abs{\est_t - \mean} = 
			%O\paren{ \sqrt{\frac{1}{t} \ln\frac{1}{\delta}} + \sqrt{ \sum_{\ell=1}^{\lceil \log M_t \rceil}\eta^2(m,n,\ell,\delta) \log n } } \\
			%&= O\paren{ \frac{1}{\sqrt{t}} + \frac{\sqrt{M_t}}{t\eps} \sqrt{\log t} (\log m) (\log n)^{3/2} \sqrt{\ln\frac{n\log m}{\delta}} }
			\tilde{O}\paren{ \frac{1}{\sqrt{t}} + \frac{\sqrt{M_t}}{t\eps} },
		\end{align*}
		where $M_t$ denotes the maximum number of samples obtained from any user till time $t$, i.e., $M_t = \max \set{m_u(t) : u \in [n]}$, where $m_u(t)$ is the number of samples obtained from user $u$ till time $t$.
	\end{theorem*}
	\begin{proof}
		We fist prove the privacy guarantee and then prove the utility guarantee. Let $L := \lfloor \log m \rfloor$.
		
		\paragraph{Privacy.} The prior estimate $\prior$ is given to be user-level $\eps$-DP. We will now show that, for each $\ell \in \set{0,\ldots,L}$, the array $\binmech[\ell].{\rm NoisyPartialSums}$ is user-level $\frac{\eps}{L+1}$-DP. By composition property (Lemma~\ref{lem:composition}), this would mean that $\Big( \binmech[\ell].{\rm NoisyPartialSums} \Big)_{\ell=0}^L$ is user-level $\eps$-DP. Note that the output estimates $\est_t$ are computed using $\Big( \binmech[\ell].{\rm Sum} \Big)_{\ell=0}^L$, which is a function of \\$\Big( \binmech[\ell].{\rm NoisyPartialSums} \Big)_{\ell=0}^L$. Thus, if $\Big( \binmech[\ell].{\rm NoisyPartialSums} \Big)_{\ell=0}^L$ is user-level $\eps$-DP, and prior estimate $\prior$ is also user-level $\eps$-DP, the overall output $\paren{\est_t}_{t=1}^T$ is guaranteed to be user-level $2\eps$-DP.
		\medskip
		
		\underline{\it Proof that $\binmech[\ell].{\rm NoisyPartialSums}$ is user-level $\frac{\eps}{L+1}$-DP}: 	
		\smallskip
		
		Consider $\binmech[\ell]$. A user $u$ contributes at most one element to $\binmech[\ell].{\rm Stream}$; this element is $\Pi_\ell\paren{\sum_{j=2^{\ell-1}+1}^{2^\ell} x^{(u)}_j}$, where $\Pi_\ell(\cdot)$ is the projection on the interval $\cI_\ell$ defined in \ref{eq:singinterval}. 
		So, there are at most $n$ elements in $\binmech[\ell].{\rm Stream}$ throughout the course of the algorithm. 
		A given element in $\binmech[\ell].{\rm Stream}$ will be used at most $(1+\log n)$ times while computing terms in $\binmech[\ell].{\rm NoisyPartialSums}$. Thus, changing a user can change the $\ell_1$-norm of the array \\$\binmech[\ell].{\rm NoisyPartialSums}$ by at most $(1+\log n) (2\Delta_\ell)$, where $\Delta_\ell$ is as in \eqref{eq:singinterval}. 
		Hence, adding independent ${\rm Lap}(\eta(m,n,\ell,\delta))$ noise (with $\eta(m,n,\ell,\delta)$ as in \eqref{eq:multnoise}) while computing each term in \\$\binmech[\ell].{\rm NoisyPartialSums}$ is sufficient to ensure that the array $\binmech[\ell].{\rm NoisyPartialSums}$ remains user-level $\frac{\eps}{L+1}$-DP throughout the course of the algorithm.	
		
		\paragraph{Utility.} Exactly as in the utility proof of Theorem~3.1 (Section~\ref{subsec:3.1}), we have that with probability at least $1-\delta$, the projection operators do not play any role throughout the algorithm, and thus no truncation happens.
		\medskip
		
		\underline{\it Utility at time $t$ ignoring projections $\Pi_\ell$}:  	
		\smallskip
		
		For now, consider Algorithm~\ref{alg:multcounter} without the projection operator $\Pi_{\ell}(\cdot)$ in Line 11. Call it Algorithm~\ref{alg:multcounter}-NP (NP $\equiv$ `No Projection'). We will derive utility at time $t$ for Algorithm~\ref{alg:multcounter}-NP.
		\medskip
		
		The only difference between Algorithm~\ref{alg:multcounter}-NP and Algorithm~\ref{alg:singcounter}-NP is that the term $\paren{\sum_{j=2^{\ell-1}+1}^{2^\ell} x^{(u)}_j}$ from user $u$ is fed to $\binmech[\ell].{\rm Stream}$ instead of feeding it to a common binary mechanism stream. So, for Algorithm~\ref{alg:multcounter}-NP, using Claim~\ref{claim:wr}, we have that, at time $t$, the combined streams $\big(\binmech[\ell].{\rm Stream}\big)_{\ell=0}^L$ contain information about at least $t/2$ samples. Thus, $S_t$ would be sum of at least $t/2$ samples with Laplace noises added to it. 
		\medskip
		
		Now, since every user has contributed at most $M_t$ samples, it follows that $\binmech[\ell].{\rm Stream}$ for $\ell > \lfloor \log M_t \rfloor$ will be empty. That is, only $\binmech[0],\ldots,\binmech[\lfloor \log M_t \rfloor]$ will contribute to the sum $S_t$ at time $t$. Moreover, for $\ell \leq \lfloor \log M_t \rfloor$, since each user contributes at most one item to $\binmech[\ell].{\rm Stream}$, there can be at most $n$ terms in $\binmech[\ell].{\rm Stream}$ and in $\binmech[\ell].{\rm NoisyPartialSums}$. Thus, computing $\binmech[\ell].{\rm Sum}$ at time $t$ involves at most $(1+\log n)$ terms from $\binmech[\ell].{\rm NoisyPartialSums}$. Each term in $\binmech[\ell].{\rm NoisyPartialSums}$ has independent ${\rm Lap}(\eta(m,n,\ell,\delta))$ noise added to it, where $\eta(m,n,\ell,\delta)$ is as in \eqref{eq:multnoise}.
		\medskip
		
		%This gives us the $\sqrt{ \sum_{\ell=1}^{\lceil \log m_t \rceil}\eta^2(m,n,\ell,\delta) \log n }$ privacy error (with probability at least $1-\delta$).
		
		Thus, at time $t$, for Algorithm~\ref{alg:multcounter}-NP,
		\[
		S_t = \brac{\text{sum of at least $t/2$ Bernoulli samples}} + \brac{\sum_{\ell=0}^{\lfloor \log M_t \rfloor}\text{sum of at most $(1+\log n)$ i.i.d. ${\rm Lap}(\eta(m,n,\ell,\delta))$ terms}}
		\]
		Hence, using Lemma~\ref{lem:conc_ber} and Lemma~\ref{lem:conc_lap}, we get that at time $t$,
		\[
		\Pr\paren{ \abs{\est_t - \mean} \leq \sqrt{\frac{1}{t} \ln\frac{2}{\delta}} + \frac{c}{t} \ln\frac{1}{\delta} \sqrt{\sum_{\ell=0}^{\lfloor \log M_t \rfloor}(1+\log n)\eta(m,n,\ell,\delta)^2} } \geq 1 - 2\delta
		\]
		
		\underline{\it Final utility guarantee for Algorithm~\ref{alg:multcounter} at time $t$}:  	
		\smallskip
		
		The above utility guarantee was obtained for Algorithm~\ref{alg:multcounter}-NP, which is a variant of Algorithm~\ref{alg:multcounter} without projection operator $\Pi_\ell(\cdot)$ in Line 11. But, indeed, with probability at least $1-\delta$, no truncation happens. Proceeding as in the proof of Theorem~3.1 (Section~\ref{subsec:3.1}), we take a union bound, and get that, with probability at least $1-3\delta$, Algorithm~\ref{alg:multcounter} satisfies
		\begin{align*}
			\abs{\est_t - \mean} 
			&\leq \sqrt{\frac{1}{t} \ln\frac{2}{\delta}} + \frac{c}{t} \ln\frac{1}{\delta} \sqrt{\sum_{\ell=0}^{\lfloor \log M_t \rfloor}(1+\log n)\eta(m,n,\ell,\delta)^2} \\
			&= O\paren{ \sqrt{\frac{1}{t} \ln\frac{1}{\delta}} + \frac{\eta(m,n,\lfloor \log M_t \rfloor,\delta)}{t}\sqrt{\log n} \ln\frac{1}{\delta} } \\
			&= O\paren{ \sqrt{\frac{1}{t} \ln\frac{1}{\delta}} + \frac{\sqrt{M_t}}{t\eps} (\log m) (\log n)^{3/2} \sqrt{\ln\frac{n\log m}{\delta}} \ln\frac{1}{\delta} } \tag{from \eqref{eq:multnoise}} \\
			&= \tilde{O}\paren{ \frac{1}{\sqrt{t}} + \frac{\sqrt{M_t}}{t\eps} }.
		\end{align*}
	\end{proof}
	
	%%%%%%%%%%%%%%%%%%%%%%%%%%%%%%%%%%%%%%%%%%%%%%%%%%
	\newpage
	
	\section{Continual mean estimation: Full algorithm} \label{supp:fullcounter}

	\renewcommand{\algorithmicensure}{\textbf{Require:}}
	\subsection{Private Median Algorithm} \label{supp:median}

	\begin{claim} \label{claim:pvtmed}
		For any $\ell \geq 1$, Algorithm~\ref{alg:pvtmed} (${\rm PrivateMedian}$) is user-level $\eps$-DP. Moreover, with probability at least $1-\delta-\beta$, 
		\[
		\abs{\prior - \mean} \leq 2\sqrt{\frac{1}{2^\ell}\ln\frac{2k(\eps,\ell,\beta)}{\delta}}
		\]
		where $k(\eps,\ell,\beta) = \frac{16}{\eps} \ln\frac{2^{\ell/2}}{\beta}$.
	\end{claim}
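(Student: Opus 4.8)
The plan is to establish the two assertions separately: $\eps$-DP via the exponential mechanism, and the accuracy bound via concentration of the block means combined with the standard utility guarantee of the exponential mechanism.

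\emph{Privacy.} First I would note that $\prior$ depends on the input $(x_i,u_i)_{i=1}^{t}$ only through the rounded block means $Y'_1,\dots,Y'_k$, and that, given these, $\prior$ is exactly the output of the exponential mechanism on the finite range $\mathcal{T}$ with quality score $q(y)=-c(y)$, sampling $y\propto\exp(-\tfrac{\eps}{4}c(y))$. So it suffices to bound the sensitivity $\Delta c:=\max_{\sigma\sim\sigma'}\max_{y\in\mathcal{T}}\lvert c_{\sigma}(y)-c_{\sigma'}(y)\rvert$ and then invoke the standard fact that sampling $\propto\exp\!\big(\tfrac{\varepsilon'}{2\Delta q}\,q(y)\big)$ is $\varepsilon'$-DP; matching $\tfrac{\varepsilon'}{2\Delta c}=\tfrac{\eps}{4}$ with $\Delta c=2$ yields exactly $\varepsilon'=\eps$. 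To get $\Delta c\le 2$: a user-level neighbor perturbs only the values $x_i$ belonging to one user $j$, leaving the labels $u_i$ — and hence every count $m_u(t)$ and every $r_u=\min\{m_u(t),2^{\ell-1}\}$ — intact, so the filling loop partitions the same slots into the same arrays $S_1,\dots,S_k$ for $\sigma$ and $\sigma'$, with only user $j$'s slots carrying changed values. Since user $j$ fills at most $r_j\le 2^{\ell-1}$ consecutive slots while each array has $2^{\ell-1}$ slots, those slots meet at most two consecutive arrays, so at most two of the $Y_i$'s — hence at most two of the $Y'_i$'s — change; shifting one $Y'_i$ changes each of $\lvert\{i:Y'_i<y\}\rvert$ and $\lvert\{i:Y'_i>y\}\rvert$ by at most $1$, so shifting two of them changes $c(y)$ by at most $2$ for every $y$, giving $\Delta c\le 2$. (This sensitivity of $2$, rather than the $1$ in \cite{Feldman_COLT_2017,Levy_Neurips_2021} where each user feeds a single $Y_i$, is precisely why we divide by an extra factor $2$ in the exponent.)

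\emph{Utility.} Let $\tau_0:=\sqrt{2^{-\ell}\ln(2k/\delta)}$ and let $\mathcal{G}$ be the event that $\lvert Y_i-\mu\rvert\le\tau_0$ for all $i\in[k]$; since each $Y_i$ is the mean of $2^{\ell-1}$ i.i.d.\ $\ber(\mu)$ samples, Lemma~\ref{lem:conc_ber} with failure probability $\delta/k$ and a union bound give $\Pr(\mathcal{G})\ge 1-\delta$. Rounding to the nearest midpoint of a bin of width $2\cdot 2^{-\ell/2}$ moves each value by at most $2^{-\ell/2}$, so on $\mathcal{G}$ every $Y'_i$ lies in $[\mu-\tau_0-2^{-\ell/2},\,\mu+\tau_0+2^{-\ell/2}]$. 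Using $\lvert\mathcal{T}\rvert\le 2^{\ell/2}$ and the fact that taking $y$ equal to the $\lceil k/2\rceil$-th order statistic of $\{Y'_i\}$ gives $\min_{y\in\mathcal{T}}c(y)\le\lfloor k/2\rfloor$, the exponential mechanism's utility bound yields, with probability at least $1-\beta$ over its internal coins,
\[
c(\prior)\ \le\ \min_{y\in\mathcal{T}}c(y)+\frac{2\Delta c}{\eps}\Big(\ln\lvert\mathcal{T}\rvert+\ln\tfrac1\beta\Big)\ \le\ \frac{k}{2}+\frac{4}{\eps}\ln\frac{2^{\ell/2}}{\beta}\ =\ \frac{3k}{4}\ <\ k ,
\]
where the equality uses $k=\tfrac{16}{\eps}\ln\tfrac{2^{\ell/2}}{\beta}$. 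Since $c(\prior)<k$, neither $\{i:Y'_i<\prior\}$ nor $\{i:Y'_i>\prior\}$ equals $[k]$, so $\prior\in[\min_i Y'_i,\ \max_i Y'_i]$; hence on $\mathcal{G}$ we get $\lvert\prior-\mu\rvert\le\tau_0+2^{-\ell/2}\le 2\tau_0$, the last step using $2^{-\ell/2}\le\tau_0$, i.e.\ $\ln(2k/\delta)\ge 1$ (true since $k\ge 2$). A union bound over $\mathcal{G}$ and the exponential-mechanism event bounds the total failure probability by $\delta+\beta$, which is the claimed statement with $k=k(\eps,\ell,\beta)$.

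\emph{Main obstacle.} The one non-routine step is the sensitivity bound $\Delta c\le 2$: one has to argue that the array boundaries in the filling loop depend only on the public user–time pattern, so that a user-level change rewrites a single contiguous block of at most $2^{\ell-1}$ slots and therefore disturbs at most two of the $Y'_i$. Everything else is concentration plus the off-the-shelf exponential-mechanism guarantee, the only nuisance being the need to swallow the $2^{-\ell/2}$ rounding term into the constant factor $2$, which holds as soon as $\ln(2k/\delta)\ge 1$.
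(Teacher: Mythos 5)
Your proof is correct and follows essentially the same route as the paper: the identical sensitivity-$2$ argument for privacy (the array boundaries depend only on the unchanged user labels, a user's slots form one contiguous block of at most $2^{\ell-1}$ positions and hence meet at most two arrays), and the same concentration-plus-rounding control of the $Y'_j$ for utility. The only difference is that where the paper invokes Theorem~3.1 of \cite{Feldman_COLT_2017} to conclude that $\prior$ is a $(1/4,3/4)$-quantile of $Y'_1,\ldots,Y'_k$ (and hence lies between their minimum and maximum), you rederive this from the generic exponential-mechanism utility bound by showing $c(\prior)\le 3k/4<k$; both arguments place $\prior$ in $[\min_j Y'_j,\max_j Y'_j]$ and give the same final bound.
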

	\begin{proof}
		Let $k := k(\eps,\ell,\beta)$.
		
		\paragraph{Privacy.} From the way arrays $S_1,\ldots,S_k$ in Algorithm~\ref{alg:pvtmed} are created (Lines 3-8), it follows that samples from any given user $u$ appears in at most $2$ arrays. This is because: (i) each array contains $2^{\ell-1}$ samples; and (ii) each user contributes at most $2^{\ell-1}$ samples (see definition of $r$ in Line 4); (iii) samples from a user are added contiguously to arrays (see Lines 5-6). Now, for $j \in [k]$, since $Y_j$ is the average of samples in array $S_j$, and $Y'_j$ is a quantized version of $Y_j$, it follows that changing a user changes at most $2$ elements out of $\set{Y'_1,\ldots,Y'_k}$. Thus, for any $y \in \mathcal{T}$, the cost $c(y)$ can vary by at most $2$ if a user is changed. Since the worst-case sensitivity (w.r.t. change of a user) of cost $c$ is $\Delta c:=2$, exponential mechanism with sampling probability proportional to $\exp\paren{-\frac{\eps}{2\Delta c}c(y)}$ is $\eps$-DP (\cite{Dwork_Now_2014}) w.r.t. change of a user. This proves that Algorithm~\ref{alg:pvtmed} is user-level $\eps$-DP.
		
		\paragraph{Utility.} Since for each $j \in [k]$, $Y_j$ is sample mean of $2^{\ell-1}$ Bernoulli random variables, we have by Lemma~\ref{lem:conc_ber} that
		\[
		\forall j \in [k], \ \Pr\paren{ \abs{Y_j-\mean} \leq \sqrt{\frac{1}{2^\ell}\ln \frac{2k}{\delta}} } \geq 1-\frac{\delta}{k}.
		\]
		Thus, by union bound,
		\[
		\Pr\paren{ \forall j \in [k], \ \abs{Y_j-\mean} \leq \sqrt{\frac{1}{2^\ell}\ln \frac{2k}{\delta}} } \geq 1-\delta.
		\]
		Since $\abs{Y'_j-\mean} \leq \abs{Y'_j-Y_j} + \abs{Y_j-\mean}$, and $\abs{Y'_j-Y_j} \leq 2^{-\ell/2}$, it follows that
		\begin{equation} \label{eq:pvtmed_quantize}
			\Pr\paren{ \forall j \in [k], \ \abs{Y'_j-\mean} \leq 2 \sqrt{\frac{1}{2^\ell}\ln \frac{2k}{\delta}} } \geq 1-\delta.
		\end{equation}		
		Now, from Theorem 3.1 in \cite{Feldman_COLT_2017}, it follows that the exponential mechanism outputs $\prior$ which is a $\paren{1/4,3/4}$-quantile of $Y'_1,\ldots,Y'_k$ with probability at least $1-\beta$. (In the statement of Theorem 3.1 in \cite{Feldman_COLT_2017}, the condition ``$m \geq 4 \ln(\abs{T}/\beta)/\eps \alpha$" becomes, in our case, $k \geq 16 \ln(\abs{\mathcal{T}}/\beta)/\eps$ after substituting $m = k$, $\alpha = 2$, and accounting for the fact that the cost $c(y)$ has sensitivity $2$ w.r.t. change of a user).
		\medskip
		
		If $\abs{Y'_j-\mean} \leq 2 \sqrt{\frac{1}{2^\ell}\ln \frac{2k}{\delta}}$ holds $\forall j \in [k]$, it must also hold for $\paren{1/4,3/4}$-quantile of $Y'_1,\ldots,Y'_k$. Thus, from \eqref{eq:pvtmed_quantize} and the fact that $\prior$ is a $\paren{1/4,3/4}$-quantile \footnote{For a dataset $s \in \R^n$, a $\paren{1/4,3/4}$-quantile is any $v \in \R$ such that $\abs{\set{i \in [n] : s_i \leq v}} > \frac{n}{4}$ and $\abs{\set{i \in [n] : s_i < v}} < \frac{3n}{4}$.} of $Y'_1,\ldots,Y'_k$ with probability at least $1-\beta$, we get using union bound that
		\[
		\Pr\paren{ \abs{\prior-\mean} \leq 2 \sqrt{\frac{1}{2^\ell}\ln \frac{2k}{\delta}} } \geq 1-\delta-\beta.
		\]	
		
	\end{proof}
	%%%%%%%%%%%%
	
	\subsection{Proof of Theorem 3.4} \label{supp:mainthm}
	We will prove the following theorem.
	\begin{theorem*} %\label{thm:final}
		Algorithm \ref{alg:multcounterfull} for continual Bernoulli mean estimation is user-level $\eps$-DP. Moreover, if at time $t \in [T]$, 
		\begin{equation*} %\label{eq:diversity}
			\sum_{u=1}^{n}\min\set{m_u(t),\frac{M_t}{2}} \geq \frac{M_t}{2} \frac{16}{\eps} \paren{2L \ln \frac{3L\sqrt{M_t}}{\delta}} \quad (\text{diversity condition})
		\end{equation*}	
		then, with probability at least $1-\delta$,
		\begin{align*}
			\abs{\est_{t} - \mean} = \tilde{O}\paren{ \frac{1}{\sqrt{t}} + \frac{\sqrt{M_t}}{t\eps} }.
		\end{align*}
		Here, $m_u(t)$ is the number of samples obtained from user $u$ till time $t$, and $M_t = \max \set{m_u(t) : u \in [n]}$. 
	\end{theorem*}
	
	\begin{proof}
		Let $L := \lceil \log m \rceil$.
		\paragraph{Privacy.} Algorithm \ref{alg:multcounterfull} uses samples to: 
		\begin{itemize}
			\item[(i)] compute $\prior_\ell$ using ${\rm PrivateMedian}\paren{(x_i,u_i)_{i=1}^t, \frac{\eps}{2L},\ell, \frac{\delta}{3L}}$, for $\ell \in \set{2,\ldots,L}$; and 
			\item[(ii)] compute the array $\binmech[\ell].{\rm NoisyPartialSums}$ for $\ell \in \set{0,\ldots,L}$.
		\end{itemize}
		The output $\paren{\est_{t}}_{t=1}^T$ is computed using $\binmech[\ell].{\rm sum}$, which, in turn is a function of the array \\$\binmech[\ell].{\rm NoisyPartialSums}$, $\ell \in \set{0,\ldots,L}$.
		\medskip
		
		From Claim~\ref{claim:pvtmed}, we get that $\prior_\ell = {\rm PrivateMedian}\paren{(x_i,u_i)_{i=1}^t, \frac{\eps}{2L},\ell, \frac{\delta}{3L}}$ is user-level $\frac{\eps}{2L}$-DP. Thus, from composition property of DP, we get that $(\prior_2,\ldots,\prior_L)$ is user-level $\frac{\eps}{2}$-DP.
		\medskip
		
		Consider $\binmech[\ell]$. A user $u$ contributes at most one element to $\binmech[\ell].{\rm Stream}$; this element is $\Pi_\ell\paren{\sum_{j=2^{\ell-1}+1}^{2^\ell} x^{(u)}_j}$, where $\Pi_\ell(\cdot)$ is the projection on the interval $\cI_\ell$ defined in \ref{eq:fullprojinterval}. 
		So, there are at most $n$ elements in $\binmech[\ell].{\rm Stream}$ throughout the course of the algorithm. 
		Now, a given element in $\binmech[\ell].{\rm Stream}$ will be used at most $(1+\log n)$ times while computing terms in $\binmech[\ell].{\rm NoisyPartialSums}$. Thus, changing a user can change the $\ell_1$-norm of the array $\binmech[\ell].{\rm NoisyPartialSums}$ by at most $(1+\log n) (2\Delta_\ell)$, where $\Delta_\ell$ is as in \eqref{eq:fulldelta}. 
		Hence, adding independent ${\rm Lap}(\eta(m,n,\ell,\delta))$ noise (with $\eta(m,n,\ell,\delta)$ as in \eqref{eq:fullnoise}) while computing each term in $\binmech[\ell].{\rm NoisyPartialSums}$ is sufficient to ensure that the array $\binmech[\ell].{\rm NoisyPartialSums}$ remains user-level $\frac{\eps}{2(L+1)}$-DP throughout the course of the algorithm. Since there are $L+1$ binary mechanisms, by composition property of DP, we get that overall $\paren{\binmech[\ell].{\rm NoisyPartialSums}}_{\ell=0}^L$ is user-level $\frac{\eps}{2}$-DP.
		\medskip
		
		Since $(\prior_2,\ldots,\prior_L)$ is user-level $\frac{\eps}{2}$-DP, and $\paren{\binmech[\ell].{\rm NoisyPartialSums}}_{\ell=0}^L$ is user-level $\frac{\eps}{2}$-DP, we again use composition property to conclude that the output $(\est_t)_{t=1}^T$ by Algorithm~\ref{alg:multcounterfull} is user-level $\eps$-DP.
		
		\paragraph{Utility.} At time $t$, $M_t$ is the maximum number of samples contributed by any user. We will call $\binmech[\ell]$ ``active'' at time $t$, if there are sufficient number of users and samples so that $\prior_\ell$ can be obtained using ${\rm PrivateMedian}\paren{(x_i,u_i)_{i=1}^t, \frac{\eps}{2L},\ell, \frac{\delta}{3L}}$ (Line 10 of Algorithm~\ref{alg:multcounterfull}). Recall that we need  $\prior_\ell$ to create truncation interval $\cI_\ell$ (see \eqref{eq:fullprojinterval}). We know that, for a given user $u$, $x_1^{(u)}$ goes to $\binmech[0].{\rm Stream}$, $x_2^{(u)}$ goes to $\binmech[1].{\rm Stream}$, and $\Pi_\ell\paren{\sum_{j=2^{\ell-1}+1}^{2^\ell} x^{(u)}_j}$ goes to $\binmech[\ell].{\rm Stream}$ for $\ell \geq 2$, provided $\binmech[\ell]$ is ``active''. Thus, at time $t$, since the maximum number of samples contributed by any user is $M_t$, we would like all binary mechanisms till $\binmech[L_t]$ to be ``active'', where $L_t := \lfloor \log M_t \rfloor$. Condition \eqref{eq:diversity} guarantees that we have sufficient number of users and samples to obtain $\prior_2,\ldots,\prior_{L_t}$ via ${\rm PriateMedian}$ (Algorithm~\ref{alg:pvtmed}), thus ensuring that every truncation required at time $t$ is indeed possible. 
		\medskip
		
		\underline{\it All $\prior_\ell$'s are ``good''}: 	
		\smallskip
		
		Suppose diversity condition \eqref{eq:diversity} holds. Then, for each $\ell\in \set{2,\ldots,L}$, ${\rm PrivateMedian}\paren{(x_i,u_i)_{i=1}^t, \frac{\eps}{2L},\ell, \frac{\delta}{3L}}$ outputs a ``good'' $\prior_\ell$ satisfying
		\[
		\abs{\prior_\ell - \mean} \leq 2\sqrt{\frac{1}{2^\ell} \ln\frac{2k(\frac{\eps}{2L},\ell,\frac{\delta}{3L})}{\delta/3L}}
		\]
		with probability at least $1-\frac{2\delta}{3L}$ (see Claim~\ref{claim:pvtmed}). Thus, by union bound, we have the following: with probability at least $1-\frac{2\delta}{3}$, $\prior_\ell$ is ``good'' \textit{for every} $\ell \in \set{2,\ldots,L}$.
		\medskip
		
		%The utility analysis now proceeds as in the proof of Theorem~3.2 (Section~\ref{subsec:3.2}).
		
		\underline{\it No truncation happens}:  	
		\smallskip
		
		For a user $u$, for any $\ell \geq 1$, we have from Lemma~\ref{lem:conc_ber} that
		\[
		\Pr\paren{ \abs{\paren{\sum_{j=2^{\ell-1}+1}^{2^\ell}x_j^{(u)}} - (2^{\ell-1})\mean} \leq \sqrt{\frac{2^{\ell-1}}{2} \ln \frac{2n \log m}{\delta/3}} } \geq 1 - \frac{\delta}{3n\log m}.
		\]	
		Taking union bound over $n$ users, we have that
		\[
		\Pr\paren{\forall u \in [n], \ \abs{\paren{\sum_{j=2^{\ell-1}+1}^{2^\ell}x_j^{(u)}} - (2^{\ell-1})\mean} \leq \sqrt{\frac{2^{\ell-1}}{2} \ln \frac{2n \log m}{\delta/3}} } \geq 1 - \frac{\delta}{3\log m}.
		\]	
		Now, since all $\prior_\ell$'s are ``good'' with probablity at least $1-\frac{2\delta}{3}$, we get using union bound that
		\[
		\Pr\paren{\forall u \in [n], \forall \ell \in [L], \ \abs{\paren{\sum_{j=2^{\ell-1}+1}^{2^\ell}x_j^{(u)}} - (2^{\ell-1})\prior_\ell} \leq \sqrt{\frac{2^{\ell-1}}{2} \ln \frac{2n \log m}{\delta}} + \sqrt{2^\ell \ln\frac{2k(\frac{\eps}{2L},\ell,\frac{\delta}{3L})}{\delta/3L}} } \geq 1 - \delta.
		\]
		Note that the projection operator $\Pi_{\ell}(\cdot)$ was defined as projection on interval $\cI_\ell$ as in \eqref{eq:fullprojinterval}. The above equation shows that with probability at least $1-\delta$, the projection operators do not play any role throughout the algorithm, and thus no truncation happens.
		\medskip
		
		\underline{\it Utility at time $t$ ignoring projections $\Pi_\ell$}:  	
		\smallskip
		
		For now, consider Algorithm~\ref{alg:multcounterfull} without the projection operator $\Pi_{\ell}(\cdot)$ in Line 22. Call it Algorithm~\ref{alg:multcounterfull}-NP (NP $\equiv$ `No Projection'). We will derive utility at time $t$ for Algorithm~\ref{alg:multcounterfull}-NP. 
		\medskip
		
		Note that Algorithm~\ref{alg:multcounterfull}~{-NP} does not require $\prior_\ell$'s. Thus, utility of Algorithm~\ref{alg:multcounterfull}-NP would be same as the utility of Algorithm~\ref{alg:multcounter}~{-NP} in the proof of Theorem~3.2 (Section~\ref{subsec:3.2}).
		Hence, at time $t$, for Algorithm~\ref{alg:multcounterfull}-NP,
		\[
		\Pr\paren{ \abs{\est_t - \mean} \leq \sqrt{\frac{1}{t} \ln\frac{2}{\delta}} + \frac{c}{t} \ln\frac{1}{\delta} \sqrt{\sum_{\ell=0}^{\lfloor \log M_t \rfloor}(1+\log n)\eta(m,n,\ell,\delta)^2} } \geq 1 - 2\delta
		\]
		where $\eta(m,n,\ell,\delta)$ is as in \eqref{eq:fullnoise}.
		\medskip
			
		\underline{\it Final utility guarantee for Algorithm~\ref{alg:multcounterfull} at time $t$}:  	
		\smallskip
		
		The above utility guarantee was obtained for Algorithm~\ref{alg:multcounterfull}-NP, which is a variant of Algorithm~\ref{alg:multcounterfull} without projection operator $\Pi_\ell(\cdot)$ in Line 22. But, as argued above, with probability at least $1-\delta$, no truncation happens. Thus, proceeding as in the proof of Theorem~3.1 (Section~\ref{subsec:3.1}), we take a union bound, and get that, with probability at least $1-3\delta$, Algorithm~\ref{alg:multcounterfull} satisfies
		\begin{align*}
			\abs{\est_t - \mean} 
			&\leq \sqrt{\frac{1}{t} \ln\frac{2}{\delta}} + \frac{c}{t} \ln\frac{1}{\delta} \sqrt{\sum_{\ell=0}^{\lfloor \log M_t \rfloor}(1+\log n)\eta(m,n,\ell,\delta)^2} \\
			&= O\paren{ \sqrt{\frac{1}{t} \ln\frac{1}{\delta}} + \frac{\eta(m,n,\lfloor \log M_t \rfloor,\delta)}{t}\sqrt{\log n} \ln\frac{1}{\delta} } \\
			&= O\paren{ \sqrt{\frac{1}{t} \ln\frac{1}{\delta}} + \frac{ \Delta_{\lfloor \log M_t \rfloor}  }{t \eps}(\log m)(\log n)^{3/2} \ln\frac{1}{\delta} } \tag{from \eqref{eq:fullnoise}}\\
			&= O\paren{ \sqrt{\frac{1}{t} \ln\frac{1}{\delta}} + \frac{\sqrt{M_t}}{t\eps} (\log m) (\log n)^{3/2} \paren{\sqrt{\ln\frac{n\log m}{\delta}} + \sqrt{\ln\paren{\frac{(\log m)^2}{\eps \delta}\ln\frac{\sqrt{M_t}}{\delta}}}} \ln\frac{1}{\delta} } \tag{substituting for $\Delta_{\lfloor \log M_t \rfloor}$ from \eqref{eq:fullprojinterval}} \\
			&= \tilde{O}\paren{ \frac{1}{\sqrt{t}} + \frac{\sqrt{M_t}}{t\eps} }.
		\end{align*}
	\end{proof}

%%%%%%%%%%%%%%%%%%%%%%%%%%%%%%%%%%%%
%\newpage
%\printbibliography
%%%%%%%%%%%%%%%%%%%%%%%%%%%%%%%%%%%%%
%\end{document}

\end{document}